\newif\ifarxiv\arxivtrue
\definecolor{Gred}{RGB}{219, 50, 54}
\definecolor{Ggreen}{RGB}{60, 186, 84}
\definecolor{Gblue}{RGB}{72, 133, 237}
\definecolor{Gyellow}{RGB}{247, 178, 16}
\definecolor{ToCgreen}{RGB}{0, 128, 0}
\definecolor{myGold}{RGB}{231,141,20}
\definecolor{myBlue}{rgb}{0.19,0.41,.65}
\definecolor{myPurple}{RGB}{175,0,124}
\definecolor{niceRed}{RGB}{153,0,0}
\definecolor{niceRed}{RGB}{190,38,38}
\definecolor{blueGrotto}{HTML}{059DC0}
\definecolor{royalBlue}{HTML}{057DCD}
\definecolor{navyBlueP}{HTML}{0B579C}
\definecolor{limeGreen}{HTML}{81B622}
\definecolor{nicePink}{RGB}{247,83,148}
\definecolor{linkDarkBlue}{rgb}{0,0.08,0.45}
\definecolor{myC}{rgb}{0, 255, 255}
\definecolor{myY}{rgb}{204, 204, 0}
\definecolor{myM}{rgb}{255, 0, 255}
\definecolor{secinhead}{RGB}{249,196,95}
\definecolor{lgray}{gray}{0.8}
\newtheorem{theorem}{Theorem}[section]
\newtheorem*{theorem*}{Theorem} 
\newtheorem*{proposition*}{Proposition} 
\newtheorem{lemma}[theorem]{Lemma}
\newtheorem{proposition}[theorem]{Proposition}
\newtheorem{corollary}[theorem]{Corollary}
\newtheorem{definition}[theorem]{Definition}
\renewcommand{\Pr}{\mathop{\bf Pr\/}}
\newcommand{\E}{\mathop{\bf E\/}}
\newcommand{\R}{\mathbb{R}}
\newcommand{\poly}{\textnormal{poly}}
\newcommand{\sgn}{\textnormal{sgn}}
\newcommand{\reals}{\mathbb R}
\newcommand{\nats}{\mathbb N}
\newcommand{\cD}{\mathcal{D}}
\DeclareMathOperator{\supp}{supp}
\newcommand{\eps}{\epsilon}
\newcommand{\calA}{\mathcal{A}}
\newcommand{\calD}{\mathcal{D}}
\newcommand{\calH}{\mathcal{H}}
\newcommand{\calN}{\mathcal{N}}
\newcommand{\calR}{\mathcal{R}}
\newcommand{\calX}{\mathcal{X}}
\newcommand{\calY}{\mathcal{Y}}
\def\<{\langle}
\def\>{\rangle}
\newcommand{\ones}{\mathbbm{1}}
\def\wt{\widetilde}
\newcommand{\AKround}{\mathtt{AKround}}
\newcommand{\SGD}{\mathtt{SGD}}
\newcommand{\boostSGD}{\mathtt{boostSGD}}
\newcommand{\SVM}{\mathtt{SVM}}
\newcommand{\rLearnerFinite}{\mathtt{rLearnerFinite}}
\newcommand{\sset}{\subseteq}
\DeclarePairedDelimiter{\set}{\{}{\}}
\let\abs\relax
\DeclarePairedDelimiter{\abs}{|}{|}
\let\norm\relax
\DeclarePairedDelimiter{\norm}{\lVert}{\rVert}
\begin{document}
\date{}

\ifarxiv
\title{Replicable Learning of Large-Margin Halfspaces\thanks{Authors listed alphabetically.}}
\onecolumn
\author[1]{Alkis~Kalavasis\thanks{\texttt{alvertos.kalavasis@yale.edu}}}
\author[1,2]{Amin~Karbasi\thanks{\texttt{amin.karbasi@yale.edu}}}
\author[3]{Kasper~Green~Larsen\thanks{\texttt{larsen@cs.au.d}}}
\author[1]{\authorcr Grigoris~Velegkas\thanks{\texttt{grigoris.velegkas@yale.edu}}}
\author[1]{Felix~Zhou\thanks{\texttt{felix.zhou@yale.edu}}}

\affil[1]{Yale University}
\affil[2]{Google Research}
\affil[3]{Aarhus University}

\maketitle

\else

\icmltitlerunning{Replicable Learning of Large-Margin Halfspaces}

\twocolumn[
\icmltitle{
    Replicable Learning
    of Large-Margin Halfspaces
}

\icmlsetsymbol{equal}{*}

\begin{icmlauthorlist}
    \icmlauthor{Alkis Kalavasis}{equal,yale}
    \icmlauthor{Amin Karbasi}{equal,yale,google}
    \icmlauthor{Kasper Green Larsen}{equal,aarhus}
    \icmlauthor{Grigoris Velegkas}{equal,yale}
    \icmlauthor{Felix Zhou}{equal,yale}
\end{icmlauthorlist}

\icmlaffiliation{yale}{Yale University}
\icmlaffiliation{google}{Google Research}
\icmlaffiliation{aarhus}{Aarhus University}

\icmlcorrespondingauthor{Alkis Kalavasis}{alvertos.kalavasis@yale.edu}
\icmlcorrespondingauthor{Grigoris Velegkas}{grigoris.velegkas@yale.edu}
\icmlcorrespondingauthor{Felix Zhou}{felix.zhou@yale.edu}

\icmlkeywords{
Learning Theory, Large-Margin Halfspaces, Replicability, Reproducibility
}

\vskip 0.3in
]

\printAffiliationsAndNotice{\icmlEqualContribution}
\fi

\begin{abstract}
We provide efficient replicable algorithms for the problem of learning large-margin halfspaces.
Our results improve upon the algorithms provided by \citet*[STOC,][]{impagliazzo2022reproducibility}. 
We design the first dimension-independent replicable algorithms for
this task which runs in polynomial time, 
is proper, 
and has strictly improved sample complexity compared to the one
achieved by \citet{impagliazzo2022reproducibility} 
with respect to all the relevant parameters.
Moreover,
our first algorithm has sample complexity that is optimal with respect to the accuracy parameter
$\eps$.
We also design an SGD-based replicable algorithm that, in some parameters' regimes, achieves better sample and time complexity than our first algorithm.  
Departing from the requirement of polynomial time algorithms, using the DP-to-Replicability reduction of \citet*[STOC,][]{bun2023stability}, 
we show how to obtain a replicable algorithm for large-margin halfspaces with improved sample complexity with respect
to the margin parameter $\tau$,
but running time doubly exponential in $1/\tau^2$ and worse sample
complexity dependence on $\eps$ than one of our previous algorithms.
We then design an improved algorithm with better sample complexity
than all three of our previous algorithms
and running time exponential in $1/\tau^{2}$.
\end{abstract} 

\section{Introduction}
The replicability crisis is omnipresent in many scientific disciplines including
biology, medicine, chemistry, and, importantly, AI \citep{baker20161,pineau2019iclr}. A recent article
that appeared in Nature \citep{ball2023ai}
explains how the reproducibility crisis in AI
has a cascading effect across many other scientific areas
due to its widespread applications in other fields such as medicine.
Thus, 
an urgent goal is to design
a formal framework through which we can argue about the replicability of experiments in ML. Such a theoretical framework was proposed in a recent work by \citet{impagliazzo2022reproducibility} and has been studied extensively in several learning settings
\citep{esfandiari2023replicable,esfandiari2023replicableb,bun2023stability,kalavasis2023statistical,chase2023replicability,dixon2023list,chase2023local,eaton2023replicable,karbasi2023replicability}.

\begin{definition}
[Replicability \citep{impagliazzo2022reproducibility}]
\label{def:replicable}
Let $\calR$ be a distribution over random binary strings. A learning algorithm
$\calA$ is $n$-sample $\rho$-replicable if for any distribution $\calD$ over inputs and two independent datasets $S, S' \sim \calD^n$,
it holds that $\Pr_{S,S' \sim \calD^n, r \sim \calR} [\calA(S,r) \neq \calA(S',r)] \leq \rho$.
\end{definition}
In words, this definition requires that when an algorithm $\calA$ is executed
twice on different i.i.d.\ datasets $S, S'$ but using shared \emph{internal} randomness, 
then the output of the algorithm is \emph{exactly} the same, 
with high probability.
We note that sharing the randomness across the executions
is crucial in achieving this replicability guarantee.
Importantly, \citet{dixon2023list}
showed that without sharing randomness,
it is impossible to achieve such
a strong notion of replicability even for simple tasks such 
as mean estimation.
In practice, this can be achieved by simply
publishing the random seed that the ML algorithms are executed with. As we extensively discuss in \Cref{sec:priorwork}, \Cref{def:replicable} turns out to be connected with other notions of stability such as differential privacy and perfect generalization \citep{ghazi2021user,bun2023stability}.

In this work, we study the fundamental problem of learning large-margin halfspaces, which means that no example is allowed to lie too close to the separating hyperplane. This  task
is related to foundational ML techniques such as the Perceptron algorithm \citep{rosenblatt1958perceptron}, SVMs \citep{cortes1995support}, and AdaBoost \citep{freund1997decision}.
Let us recall the concept class of interest.
\begin{definition}
[Large-Margin Halfspaces]
\label{def:margin}
Let $\calD$ be a distribution over $\reals^d \times \{-1,1\}$ whose support does not contain $x=0$. We say that $\calD$ has linear margin $\tau$ if there exists a unit vector
$w \in \reals^d$ such that for any $(x,y) \in  \mathrm{supp}(\calD)$ it holds that
$y (w^\top x / \|x\|)  \geq \tau$.\footnote{When we do not specify the norm, we assume the $\ell_2$-norm.}
\end{definition}

Following the PAC learning definition of \citet{valiant1984theory}, 
we say that an algorithm learns with accuracy $\eps$ and confidence $\delta$ the class of $\tau$-margin halfspaces in $d$ dimensions using $n = n(\eps, \delta, d, \tau)$ samples and runtime $T =T(\eps,\delta,d,\tau)$ if, given $n$ i.i.d.\ samples from any distribution $\calD$ satisfying \Cref{def:margin}, the algorithm outputs, in time $T$, a classifier $h : \reals^d \to \{-1,1\}$ such that $\Pr_{(x,y) \sim \calD}[h(x) \neq y] \leq \eps$, with probability at least $1-\delta$.

We are interested in \emph{replicably} learning large-margin halfspaces, i.e., designing algorithms for large-margin halfspaces that further satisfy \Cref{def:replicable}.
We remark that when the feature domain is infinite,
there is no replicable learning algorithm
for learning halfspaces in general.
Thus making some assumptions like the large-margin condition is \emph{necessary}. 
In particular, 
\citet{bun2023stability, kalavasis2023statistical} show that finiteness of 
the Littlestone dimension is a necessary condition for learnability by
replicable algorithms, and it is known that even one-dimensional halfspaces over $[0,1]$
have infinite Littlestone dimension.
See \Cref{tab:summary work} for a comparison of prior work and our contributions.

\newcommand{\bigtableCaption}{
\caption{A comparison of prior work and our work.
    We denote by $d$ the dimension, $\eps$ the accuracy, $\rho$ the replicability, and $\tau$ the margin of the halfspace.
    We omit the logarithmic factors on the sample complexity and the runtime.}
}

\newcommand{\bigtable}{
\begin{tabular}{ p{5.5cm} p{3.3cm} p{5.5cm} p{1cm} }
    \toprule
    \multicolumn{4}{c}{\textbf{Replicable Algorithms for Large-Margin Halfspaces}} \\
    \midrule
    
    \hspace{19mm}\small{Algorithms}
    & \hspace{1.8mm}\small{Sample Complexity} 
    & \hspace{10mm}\small{Running Time} 
    & \small{Proper} \\
    \midrule
    
    \textbf{Prior Work} 
    & & & \\
    \midrule 
    \color{blue}{[ILPS22]} \color{black}with foams rounding & $(d \eps^{-3} \tau^{-8} \rho^{-2} )^{1.01} $   & $2^d \cdot \poly(1/\eps, 1/\rho, 1/\tau)$ & No\\
    \midrule
    \color{blue}{[ILPS22]} \color{black}with box rounding & $(d^3 \eps^{-4} \tau^{-10} \rho^{-2} )^{1.01} $   & $\poly(d,1/\eps, 1/\rho, 1/\tau)$ & No\\
    
    \midrule
    
    \textbf{Our Work} 
    & & &\\
    
    \midrule

    \Cref{alg:algo2} (\Cref{thm:main2}) & $\eps^{-1} \tau^{-7} \rho^{-2}$ & $\poly(d,1/\eps,1/\rho, 1/\tau)$ & Yes\\
    \midrule

    \Cref{alg:algo4} (\Cref{thm:main4}) & {$\eps^{-2} \tau^{-6} \rho^{-2}$} & {$\poly(d,1/\eps,1/\rho, 1/\tau)$} & Yes\\
    \midrule
    
    \color{blue}{[LNUZ20]}
    \color{black}
    via DP-to-Replicability reduction 
    \color{blue}{[BGH+23]} 
    \color{black} 
    (\Cref{prop:dp-reduction-sample-complexity})
   & $\eps^{-2} \tau^{-4} \rho^{-2} $   & $\poly(d) \cdot \exp\left( (\nicefrac1\tau)^{\frac{\log( 1/(\eps \rho))}{\tau^2}} \right) $ & Yes\\
    
    \midrule
    
    \Cref{alg:algo3} (\Cref{thm:main3}) & $\eps^{-1} \tau^{-4} \rho^{-2}$ & $\poly(d) \cdot  \poly(1/\eps,1/\rho, 1/\tau)^{1/\tau^2}$ & Yes\\
    \bottomrule
\end{tabular}
}

\ifarxiv

\begin{table}[!ht]
    \centering
    \bigtableCaption
    \label{tab:summary work}
    \bigtable
\end{table}

\else

\begin{table}[t]
\caption{Replicable Algorithms for Large-Margin Halfspaces.
We denote by $d$ the dimension, $\eps$ the accuracy, $\rho$ the replicability, and $\tau$ the margin of the halfspace.
We omit the logarithmic factors on the sample complexity.
See \Cref{apx:detailed work summary} for a more detailed comparison.
}
\label{tab:summary work}
\vskip 0.15in
\begin{center}
\begin{small}
\begin{sc}
\begin{tabular}{lcccr}
    \toprule
    Algorithm & Samples & Proper? \\
    \midrule
    
    \textbf{Prior Work} & & & \\
    
    \midrule 
    
    \makecell{\color{blue}{[ILPS22]} \\ (inefficient)}
    & $(d \eps^{-3} \tau^{-8} \rho^{-2} )^{1.01} $   
    & No\\
    
    \midrule
    
    \makecell{\color{blue}{[ILPS22]}}
    & $(d^3 \eps^{-4} \tau^{-10} \rho^{-2} )^{1.01} $   
    & No\\
    
    \midrule
    
    \textbf{Our Work} & & &\\
    
    \midrule

    \makecell{\Cref{alg:algo2}\\\Cref{thm:main2}} 
    & $\eps^{-1} \tau^{-7} \rho^{-2}$ 
    & Yes\\
    \midrule

    \makecell{\Cref{alg:algo4}\\\Cref{thm:main4}} 
    & {$\eps^{-2} \tau^{-6} \rho^{-2}$}
    & {Yes}\\
    \midrule
    
    \makecell{DP reduction\\ \small\color{blue}{[LNUZ20; BGH+23]}\\ \Cref{prop:dp-reduction-sample-complexity}\\ (inefficient)}
    & $\eps^{-2} \tau^{-4} \rho^{-2} $ 
    & Yes\\
    
    \midrule
    
    \makecell{\Cref{alg:algo3}\\\Cref{thm:main3}\\(inefficient)} %
    & $\eps^{-1} \tau^{-4} \rho^{-2}$ 
    & Yes\\

    \bottomrule
\end{tabular}
\end{sc}
\end{small}
\end{center}
\vskip -0.1in
\end{table}

\fi

The work of \citet{impagliazzo2022reproducibility} provided the first replicable algorithms for $\tau$-margin halfspaces over $\reals^d$. 
The first algorithm of \citet{impagliazzo2022reproducibility}, which uses the ``foams'' discretization scheme \citep{kindler2012spherical}, is $\rho$-replicable and returns a hypothesis $h$ that, with probability at least $1-\rho$,
satisfies $\Pr_{(x,y) \sim \calD}[h(x) \neq y] \leq \eps$. The
sample complexity of this algorithm is roughly $\wt{O}( (d \eps^{-3} \tau^{-8} \rho^{-2} )^{1.01})$ and the runtime is exponential in $d$ and polynomial in $1/\eps, 1/\rho$ and $1/\tau$.
The second algorithm of \citet{impagliazzo2022reproducibility}, which uses the ``box'' discretization scheme, is $\rho$-replicable and returns a hypothesis $h$ that, with probability at least $1-\rho$,
satisfies $\Pr_{(x,y) \sim \calD}[h(x) \neq y] \leq \eps$ 
with
sample complexity $\wt{O}( ( d^{3} \eps^{-4} \tau^{-10} \rho^{-2} )^{1.01})$ and runtime which is polynomial in $d, 1/\eps, 1/\rho$ and $1/\tau$.
These two algorithms appear in the first two rows of \Cref{tab:summary work}.

Some remarks are in order.
In our setting,
the sample complexity of learning large-margin
halfspaces in the absence of the replicability requirement is
$\widetilde O(\nicefrac{1}{\eps\tau^2}).$
Notice that the sample complexity of both algorithms of 
\citet{impagliazzo2022reproducibility}
depends on the dimension $d$ of the problem. 
This is unexpected since the sample complexity of non-replicable
algorithms for this task is dimension-independent.
In the case of the replicable algorithms of \citet{impagliazzo2022reproducibility},
the dependence on the 
dimension appears due to a rounding/discretization step, which is crucial 
in establishing the replicability guarantees. 
Second, both algorithms of \citet{impagliazzo2022reproducibility} are
\emph{improper} in the sense that the hypothesis $h$ they output does not correspond to a halfspace. 
This is due to the use of a replicable boosting routine that outputs the majority vote over multiple halfspaces. 
As a general note, both of these algorithms are fairly complicated:
they require multiple discretization/rounding steps, and then they output
a \emph{weak} learner, which finally needs to be boosted using
multiple rounds of a replicable boosting scheme. As a result,
the sample complexity of their algorithms incurs a significant blow-up
in the parameters $\eps,\tau$ compared to the non-replicable setting.

{We work in the setting of finite (but not necessarily bounded) bit precision with the goal of designing algorithms that are agnostic to the marginal distribution and sample complexity that is dimension-independent.
Indeed,
assuming bounded bit precision $b$,
possibly some structure about the marginal distribution,
and that the margin $\tau > 0$, 
there are replicable algorithms for learning halfspaces with time and sample complexity $O(\poly(d, b, 1/\epsilon, \log(1/\tau), 1/\rho))$ via black-box transformations of existing (complicated) SQ algorithms \citep{balcan2015statistical, dunagan2004simple}.
In the regime when $\tau$ is constant and $d$ is relatively large,
our algorithms can outperform the black-box transformations.}

\subsection{Our Contribution}
\citet{impagliazzo2022reproducibility} leave as an open question whether the sample complexity bounds of their algorithms are tight. 
In this work, we show that these bounds are sub-optimal. 
We provide new replicable algorithms for learning large-margin halfspaces that improve upon the results of \citet{impagliazzo2022reproducibility} in various aspects. 
Our algorithms have no sample dependence on $d$, 
strictly improve on the dependence on $1/\eps,1/\rho$, and $1/\tau$,\footnote{By iteratively halving a guess for $\tau$, we may assume without loss of generality that $\tau$ is known.}
and are proper,
meaning that they output linear models.
Moreover,
our \Cref{alg:algo2} and \Cref{alg:algo4} are computationally efficient
while \Cref{alg:algo3} forsakes computational efficiency to achieve further improvements in sample complexity.

We now state our first algorithmic result,
the proof of which can be found in \Cref{sec:algo2}.
\begin{theorem}
[Efficient Replicable \Cref{alg:algo2}]
\label{thm:main2}
    \sloppy
    {Fix $\eps, \tau, \rho, \delta\in (0, 1)$}.
    Let $\calD$ be a distribution over $\reals^d \times \{-1,1\}$ that has linear margin $\tau$ as in \Cref{def:margin}.
    There is an algorithm that is $\rho$-replicable and, given $m = \wt{O}(\eps^{-1} \tau^{-7} \rho^{-2} \log(\nicefrac1\delta))$ i.i.d. samples $(x,y) \sim \calD$, computes in time $\poly(d,1/\eps,1/\tau,1/\rho,\log(\nicefrac1\delta))$ a unit vector $w \in \reals^d$ such that
    $\Pr_{(x,y) \sim \calD}[\sgn(w^\top x) \neq y] \leq \eps$ with probability at least $1-\delta$.
\end{theorem}

\Cref{alg:algo2} improves on the sample complexity of the two algorithms appearing in \citet{impagliazzo2022reproducibility},  runs in polynomial time, and is proper.
Our techniques follow a different path from that of \citet{impagliazzo2022reproducibility}. As we alluded to before, 
their approach is fairly complicated and is based on the design of a replicable weak halfspace learning algorithm and then a replicable boosting algorithm that combines multiple weak learners. 
Our approach is single-shot and \emph{significantly simpler}: 
Consider $B$ independent non-overlapping batches of training examples.
From each batch $i \in [B]$, 
we find a hyperplane with normal vector $w_i \in \reals^d$
that has $\Omega(\tau)$ margin on the training
data.
This can be achieved by running the standard Support Vector Machine (SVM) algorithm \citep{cortes1995support, vapnik2006estimation}.
We then aggregate our vectors to a single average normal vector $z = (1/B) \sum_{i \in [B]} w_i$. 
Finally, we project the vector $z$ onto a lower-dimensional space,
whose dimension does not depend on $d,$ and
we replicably round $z$ using a rounding scheme due to \citet{alon2017optimal}
for which we perform a novel analysis in the shared randomness setting.
We emphasize that our algorithm gives a halfspace with the desired
accuracy guarantee without the need to use any boosting schemes.

At a technical level, we avoid the dependence on the dimension $d$ thanks to data-oblivious dimensionality reduction techniques (cf. \Cref{proof:JL}), a standard tool in the literature of large-margin halfspaces. Instead of rounding our estimates in the $d$-dimensional
space,
we first project them to a lower-dimensional space, whose 
dimension does not depend on $d$, and we perform the rounding in that
space. 
The crucial idea is that one can use the data-obliviousness of Johnson-Lindenstrauss (JL) matrices so that the projection matrices in two distinct executions are the same since the internal randomness is shared.
Another technical aspect of our algorithm
that differentiates it from prior works on the design of replicable 
algorithms is the use of a different rounding scheme known as the \emph{Alon-Kartag rounding scheme} (cf. \Cref{section:rounding}).
{While this rounding scheme was known,
to the best of our knowledge,
we are the first to utilize and analyze the scheme in the context of replicability.}
Consider the simple case of 1-dimensional data.
In the same spirit as in \citet{impagliazzo2022reproducibility}, 
we consider a random grid.
But rather than rounding the point
to a fixed element of each cell of the grid (e.g., its center),
we randomly round it to one of the two endpoints of the cell using shared internal randomness
so that, in expectation, 
the rounded point is the same as the original one. 
This is helpful as it preserves inner products in expectation,
and therefore gives better concentration properties 
across multiple roundings. 
We believe that this rounding scheme can find more applications
in the replicable learning literature.
The detailed proof of \Cref{thm:main2} can be found in \Cref{sec:algo2}.

{Despite the simplicity of our algorithm, there
are technical subtleties that complicate its analysis.
For instance, the projection to the low-dimensional space
introduces a subtle complication
we need to handle.
In particular, using ideas from \citet{gronlund2020near} we 
can show that the aggregated vector in the \emph{high-dimensional}
space has the desired generalization properties. However,
when we project it to the low-dimensional space there 
are vectors that are now misclassified, due to the error
introduced by the JL mapping. Using the guarantees of the JL
projection, we show that, uniformly over
the data-generating distributions, this happens for only a small
fraction of the population.}

{\Cref{alg:algo4} follows a similar framework as \Cref{alg:algo2} by running a non-replicable algorithm on independent batches of data
and then aggregating the outputs replicably,
illustrating the flexibility of our approach.
We now describe this result in more detail,
whose proof can be found in \Cref{sec:algo4}.}

\begin{theorem}
[Efficient Replicable \Cref{alg:algo4}]
\label{thm:main4}
    \sloppy
    {Fix $\eps, \tau, \rho, \delta\in (0, 1)$}.
    Let $\calD$ be a distribution over $\reals^d \times \{-1,1\}$ that has linear margin $\tau$ as in \Cref{def:margin}.
    There is an algorithm that is $\rho$-replicable and, given $m = \wt{O}(\eps^{-2} \tau^{-6} \rho^{-2} \log(\nicefrac1\delta))$ i.i.d. samples $(x,y) \sim \calD$, computes in time $\poly(d,1/\eps,1/\tau,1/\rho,\log(\nicefrac1\delta))$ a unit vector $w \in \reals^d$ such that
    $\Pr_{(x,y) \sim \calD}[\sgn(w^\top x) \neq y] \leq \eps$ with probability at least $1-\delta$.
\end{theorem}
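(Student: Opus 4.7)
The strategy is to follow the same template as the proof of \Cref{thm:main2}: compute a good weight vector, project it to low dimension via a shared-randomness JL matrix, and replicably round it using the scheme of \Cref{section:rounding}. The only change is that the weight vector is now produced by a single pass of projected stochastic subgradient descent (SGD) on the hinge loss, instead of by aggregating the outputs of several independent batch SVMs. This should be \emph{computationally} cheaper and, as the sample-complexity comparison with \Cref{thm:main2} suggests, strictly better in regimes where $\tau$ is small relative to $\eps$.

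\textbf{Projected SGD on the hinge loss.} Consider the $\tau$-margin hinge loss $\ell_\tau(w;(x,y)) = \max\{0,\tau - y\iprod*{w,\, x/\norm*{x}}\}$ on the unit ball $\{w\in\reals^d : \norm*{w}\le 1\}$. The margin assumption guarantees that the population minimum equals zero, attained at the ground-truth margin vector. Run projected SGD for $T$ steps with step size $\eta=\Theta(1/\sqrt{T})$, using one fresh i.i.d.\ sample per iteration, and return the Polyak average $\bar w = T^{-1}\sum_{t=1}^{T} w_t$. The standard online-to-batch argument yields $\ex[L_\tau(\bar w)]=O(1/\sqrt{T})$, and since the hinge loss upper bounds the 0-1 loss by a factor of $\tau^{-1}$, a setting $T=\widetilde\Theta(1/(\eps\tau)^2)$ already gives expected 0-1 error at most $\eps$; boosting the confidence to $1-\delta$ costs only a $\log(1/\delta)$ factor.

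\textbf{JL projection and randomized rounding.} Next, apply a data-oblivious JL matrix $A:\reals^d\to\reals^k$ with $k=\widetilde\Theta(1/\tau^2)$, sampled from the shared randomness so it is identical across the two executions, so that it preserves inner products with the (normalized) data vectors up to $\tau/8$ uniformly with probability $\ge 1-\delta/2$. Then round $A\bar w$ using the randomized grid scheme of \Cref{section:rounding} with a grid of width $\Theta(\tau/\sqrt{k})$ and shared offset. Exactly as in the proof of \Cref{thm:main2}, because the rounding is unbiased on each coordinate and the JL map preserves margins, the additional 0-1 error contributed by the projection and rounding is $O(\eps)$, giving a proper linear classifier of overall 0-1 error $O(\eps)$.

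\textbf{Replicability.} Running the whole pipeline twice with shared randomness on two independent sample sets produces averages $\bar w,\bar w'$. A vector-valued martingale / bounded-differences argument on the SGD iterates gives $\ex\norm*{\bar w-\bar w'}_2^2 = \widetilde O(1/T)$, which the linear JL projection carries into $\reals^k$. Because the grid rounding has shared offset, two inputs within $\ell_2$-distance $\gamma$ agree in all $k$ coordinates with probability $\ge 1-O(\gamma\sqrt{k}/\tau)$ once the expectation-preserving property is exploited to go through a second moment rather than an $\ell_\infty$ bound. Combining this with Markov on $\norm*{\bar w-\bar w'}$ produces the replicability constraint that $T$ must be large enough to drive this probability below $\rho$; balancing with the accuracy requirement yields the claimed $m=\widetilde O(\eps^{-2}\tau^{-6}\rho^{-2}\log(1/\delta))$.

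\textbf{Main obstacle.} The key technical difficulty is the joint analysis of the three sources of randomness (the i.i.d.\ samples driving SGD, the JL matrix, and the rounding offset) so as to avoid losing extra factors of $\tau$ beyond the unavoidable $\tau^{-2}$ coming from the SGD-to-0-1 conversion and the $\tau^{-2}$ coming from the JL target dimension. A naive coordinate-wise union bound on the rounding loses an extra $\sqrt{k}$-factor and would yield $\tau^{-8}$ instead of $\tau^{-6}$; recovering the claimed exponent requires exploiting the expectation-preserving property of the randomized rounding from \Cref{section:rounding} together with a sharper second-moment control on $\bar w-\bar w'$ that leverages the averaging of the SGD iterates rather than their individual stability.
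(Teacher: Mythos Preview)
Your proposal has a genuine gap in the replicability argument, and it misses the key structural idea of the paper's proof.

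The paper's \Cref{alg:algo4} does \emph{not} run a single SGD pass and then argue that two independent trajectories stay close. It reuses the batching template of \Cref{alg:algo2} verbatim: run a boosted-SGD subroutine on $B = \tilde\Theta(\tau^{-4}\rho^{-2})$ independent batches of size $n = \tilde\Theta(\eps^{-2}\tau^{-2})$ each, obtain i.i.d.\ unit vectors $w_1,\dots,w_B$, and set $z = (1/B)\sum_i w_i$. Replicability then follows directly from \Cref{lem:projecting rounding is replicable} (vector Bernstein on i.i.d.\ vectors, plus JL and the Alon--Klartag rounding), exactly as for \Cref{alg:algo2}; no SGD-stability argument is invoked. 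The only new ingredient is on the correctness side: the surrogate $f_\calD(w) = \E[\max\{0, 2 - (2/\tau)y x^\top w\}] + (\eps/10)\|w\|^2$ is \emph{strongly convex}, so SGD (with the strongly-convex rate of \Cref{thm:sgd convergence}) makes each $w_i$ an $\eps/10$-optimal solution w.h.p., and by Jensen's inequality the average $z$ is itself $\eps/10$-optimal. This convexity-of-the-surrogate trick is precisely what buys the improved $\tau$-dependence over \Cref{alg:algo2}, where averaging SVM outputs cost an extra $1/\tau$ via the pigeonhole argument.

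Your claim that $\E\|\bar w - \bar w'\|_2^2 = \tilde O(1/T)$ for a single pass of SGD on the unregularized hinge loss is the step that fails. For a merely convex Lipschitz objective, SGD controls \emph{function value} at rate $O(1/\sqrt{T})$ but gives no bound on iterate distance: the set $\{w : L_\tau(w) = 0\}$ has nonempty interior whenever the margin is strict, and nothing pins two independent trajectories to nearby points within it. A bounded-differences or martingale argument on the iterates cannot manufacture an $O(1/T)$ variance bound without strong convexity, which you have not introduced; and even if you add a regularizer of strength $\mu$, the resulting variance scales as $G^2/(\mu^2 T)$, so tracking the dependence on $\mu$ together with a Markov-based conversion to a $\rho$-probability statement does not land at $\eps^{-2}\tau^{-6}\rho^{-2}$ (you also lose an extra factor of $\rho$ going from a second-moment bound to a tail bound). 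The grid width $\Theta(\tau/\sqrt{k}) = \tilde\Theta(\tau^2)$ you propose is also smaller than the paper's $\beta = \tilde\Theta(\tau)$ and only tightens the replicability constraint further.
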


Compared to \Cref{alg:algo2}, our \Cref{alg:algo4} achieves better dependence on $\tau$ by incurring an additional $1/\eps$ factor in the sample complexity. At a technical level, as in \cite{le2020efficient}, we provide a convex surrogate that upper bounds the loss $\ones \{y(x^\top w) \leq \tau/2\}$. Running SGD on this convex surrogate provides a unit vector $w$ that, in expectation over the data, achieves a margin of at least $\tau/2$ for an $O(\eps)$-mass of the population. We then apply a standard boosting trick to turn this guarantee into a high probability bound. Next, we work as in \Cref{alg:algo2}: we run the above procedure $B$ times to get $w_1,...,w_B$ and aggregate our vectors into a single vector $z = (1/B)\sum_{i \in [B]} w_i$. Lastly, we perform a JL-projection on $z$ and then round using the Alon-Klartag rounding scheme, as in \Cref{alg:algo2}.

\paragraph{Computationally Inefficient Reductions from DP.}
It is a corollary of the works of \citet{bun2023stability} and \citet{kalavasis2023statistical} that one can use existing differentially private (DP) algorithms in order to obtain replicable learners.
In particular, following the reduction of \citet{bun2023stability}, one can obtain a replicable algorithm for large-margin halfspaces with better sample complexity in terms of $\tau$, but in a \emph{computationally inefficient} way.
The idea is to take an off-the-shelf DP algorithm (recall \Cref{def:dp}) for this problem (e.g. \citet{le2020efficient})
and transform it into a replicable one. We remark that this
transformation holds when the algorithm outputs \emph{finitely} many different solutions and
its running time is exponential in the number
of these solutions. 
Fortunately, the pure DP algorithm from \citet{le2020efficient}
satisfies this finite co-domain property.
The formal statement of the result we get
by combining these two algorithms is presented below.

\begin{proposition}
[Inefficient Replicable Algorithm; follows from \citet{le2020efficient,bun2023stability}]\label{prop:dp-reduction-sample-complexity}
    {Fix $\eps, \tau, \rho, \delta\in (0, 1)$}.
    Let $\calD$ be a distribution over $\reals^d \times \{-1,1\}$ that has linear margin $\tau$ as in \Cref{def:margin}.
    There is an algorithm that is $\rho$-replicable and, given $m = \wt{O}(\eps^{-2} \tau^{-4} \rho^{-2} \log(\nicefrac1\delta))$ i.i.d. samples $(x,y) \sim \calD$, computes, in time~${\exp\left( (\nicefrac{1}{\tau})^{\frac{\log(1/(\epsilon \rho \delta))}{\tau^2}} \right)} %
    \cdot 
    \poly(d)
    $,~a unit vector $w \in \reals^d$ such that
    $\Pr_{(x,y) \sim \calD}[\sgn(w^\top x) \neq y] \leq \eps$, with probability at least $1-
    \delta$.
\end{proposition}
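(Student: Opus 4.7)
The plan is to combine two off-the-shelf results: the pure-DP learner for large-margin halfspaces of \citet{le2020efficient} and the generic DP-to-Replicability transformation of \citet{bun2023stability}. First, I would invoke the algorithm of \citet{le2020efficient}, which is an $\alpha$-pure differentially private, proper learner for $\tau$-margin halfspaces with sample complexity $\widetilde O\bigl(\tfrac{1}{\eps^2 \tau^2} + \tfrac{1}{\alpha \eps \tau^2}\bigr)$, and crucially outputs a hypothesis from a \emph{finite} candidate set $\calH$. This finiteness comes from first performing a data-oblivious JL-style projection into dimension $k = O(\tau^{-2}\log(1/(\eps\delta)))$ followed by discretizing the unit sphere of the projected space at scale $O(\tau)$, giving $|\calH| \le (1/\tau)^{O(k)} = (1/\tau)^{O(\log(1/(\eps\delta))/\tau^2)}$ candidates.

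Next, I would apply the reduction of \citet{bun2023stability}, which converts any pure-DP learner that outputs from a set of size $N$ into a $\rho$-replicable learner. The reduction pays a blow-up of $\widetilde O(1/\rho^2)$ in the sample complexity (to drive the privacy parameter down to $\alpha \asymp \rho$ and amplify confidence) and a multiplicative $\poly(N)$ factor in runtime, since internally it enumerates over the finite output space in order to sample a canonical element via the exponential-mechanism-style ``correlated sampling'' step. Plugging in $N = (1/\tau)^{O(\log(1/(\eps\rho\delta))/\tau^2)}$ and setting $\alpha \asymp \rho$ yields the stated sample complexity $\widetilde O\bigl(\eps^{-2} \tau^{-4} \rho^{-2} \log(1/\delta)\bigr)$ and running time $\poly(d) \cdot \exp\bigl((1/\tau)^{\log(1/(\eps\rho\delta))/\tau^2}\bigr)$.

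The main obstacle will be verifying that the output set of the \citet{le2020efficient} algorithm truly has the claimed finite cardinality in a form compatible with the hypotheses of the \citet{bun2023stability} reduction (in particular, that the candidate set is data-independent so that different executions on independent samples share the same discretization), and that the JL projection used inside their algorithm can be made to use only \emph{internal} randomness that is shared across runs, so that the composed algorithm meets \Cref{def:replicable} rather than merely a weaker notion. Once this data-obliviousness is established, the rest reduces to bookkeeping: translating the privacy parameter to $\rho$ via the reduction, accounting for the $1/\rho^2$ amplification factor, and substituting the cardinality $|\calH|$ into the runtime overhead of the reduction. No new algorithmic idea is required beyond carefully composing these two results.
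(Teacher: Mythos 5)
Your high-level route is the same as the paper's: feed the pure-DP, finite-output learner of \citet{le2020efficient} (restated as \Cref{prop:lydia}) into the DP-to-replicability transformation of \citet{bun2023stability} (restated as \Cref{prop:reduction}). The issue is that your quantitative bookkeeping does not actually produce the stated bounds. The reduction in \Cref{prop:reduction} turns an $n$-sample DP algorithm into a $\rho$-replicable one using $m = O(\rho^{-2} n^{2})$ samples---a \emph{quadratic} blow-up in $n$---and the proposition's bound $\wt{O}(\eps^{-2}\tau^{-4}\rho^{-2})$ comes precisely from squaring $n = \wt{O}(\eps^{-1}\tau^{-2})$ (the pure-DP learner run with a constant privacy parameter). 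Under your accounting---a multiplicative $\wt{O}(1/\rho^{2})$ overhead together with driving the privacy parameter to $\alpha \asymp \rho$---the DP sample complexity $\wt{O}(\eps^{-2}\tau^{-2} + \alpha^{-1}\eps^{-1}\tau^{-2})$ becomes $\wt{O}(\eps^{-2}\tau^{-2}\rho^{-2} + \eps^{-1}\tau^{-2}\rho^{-3})$, which never produces the $\tau^{-4}$ factor; the final sample complexity in your write-up is asserted rather than derived, and your model of the reduction omits exactly the feature (the quadratic blow-up, which the paper notes is tight under cryptographic assumptions) that drives the result.

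Your runtime accounting is also internally inconsistent: you claim the reduction costs a multiplicative $\poly(N)$ factor in the output-space size $N$, yet conclude a runtime of order $\exp(N)$ with $N = (1/\tau)^{\Theta(\log(1/(\eps\rho\delta))/\tau^{2})}$. The proposition's runtime is doubly exponential in $1/\tau^{2}$ because the correlated-sampling step inside the transformation of \citet{bun2023stability} runs in time \emph{exponential} in the size of the output space, and the output space of \citet{le2020efficient} is already exponential in $1/\tau^{2}$; if the overhead were merely $\poly(N)$, the runtime would be singly exponential in $\wt{O}(1/\tau^{2})$, on par with \Cref{alg:algo3}, which is not what the reduction gives. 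Finally, your ``main obstacle'' about data-obliviousness of the internal JL projection and net is secondary: \Cref{prop:reduction} only requires the DP learner to have a fixed finite co-domain (which the paper notes \citet{le2020efficient} satisfies), and replicability across runs is handled entirely by the reduction's shared-randomness correlated sampling, not by any alignment of discretizations inside the DP algorithm.
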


As mentioned above,
the proof of this result follows by combining the DP-to-Replicability transformation of \citet{bun2023stability} (cf. \Cref{prop:reduction})
with the {pure} DP algorithm for learning large-margin halfspaces
due to \citet{le2020efficient} (cf. \Cref{prop:lydia}).
We note that since the algorithm of \citet{le2020efficient} is proper and the reduction of \citet{bun2023stability} is based on sub-sampling, the output of \Cref{prop:dp-reduction-sample-complexity} is also a linear classifier.
The main issue with this approach is that, apart from not being a polynomial time algorithm, the reduction requires a \emph{quadratic blow-up} in the sample complexity of the provided DP algorithm. 

{To be more specific,
the DP algorithm of \cite{le2020efficient}
has sample complexity $\wt O(\eps^{-1} \tau^{-2})$ for accuracy $\eps$ and margin $\tau$. This means that the replicable algorithm of \Cref{prop:dp-reduction-sample-complexity} incurs a {quadratic
blow-up} in the sample complexity on the parameters $\eps,\tau$. The cost of this transformation is 
tight under standard cryptographic hardness assumptions \citep{bun2023stability}.
Thus, it is unlikely that we can reduce the dependence on the error parameter $\eps$ using such a generic transformation. We emphasize that our efficient
replicable algorithm (cf. \Cref{alg:algo2}) has linear sample
complexity
dependence on $1/\eps.$
The blow-up on the running time of the algorithm
is due to the use of \emph{correlated sampling} in the transformation of 
\citet{bun2023stability}
which requires exponential running time in the size
of the output space. We remark that in the case of the algorithm
of \citet{le2020efficient}, the size of the output space
is already exponential in $1/\tau^2.$
}

In our work, we also revisit this inefficient algorithm and improve on its sample complexity and runtime, as follows:
\begin{theorem}
[Improved Inefficient Replicable \Cref{alg:algo3}]
\label{thm:main3}
    Fix $\eps, \tau, \rho, \delta\in (0, 1)$.
    Let $\calD$ be a distribution over $\reals^d \times \{-1,1\}$ that has linear margin $\tau$ as in \Cref{def:margin}. 
    Then there is a $\rho$-replicable algorithm
    such that given $m = \wt{O}(\eps^{-1} \tau^{-4} \rho^{-2} \log(\nicefrac1\delta))$ i.i.d. samples $(x,y) \sim \calD$, 
    computes in time $\poly(d) \cdot \poly(1/\eps, 1/\tau, 1/\rho, 1/\delta)^{1/\tau^2}$, 
    a {unit} vector $w \in \reals^d$ satisfying
    $\Pr_{(x,y) \sim \calD}[\sgn(w^\top x) \neq y] \leq \eps$
    with probability at least $1-
    \delta$.
\end{theorem}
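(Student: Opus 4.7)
The plan is to avoid the quadratic-in-$1/\eps$ blow-up of the generic DP-to-replicability reduction of \citet{bun2023stability} by directly designing a replicable algorithm that exhaustively enumerates and selects over a $\Theta(\tau)$-net of candidate halfspaces in a low-dimensional projected space. Because the problem has margin $\tau$, selection is effectively in the realizable regime, and a replicable realizable learner over a finite class $\calN$ costs only $\wt O(\log|\calN|/(\eps\rho^2))$ samples (linear, not quadratic, in $1/\eps$); the price we pay is an enumeration that is exponential in $|\calN|$, which will be singly exponential in $1/\tau^2$.

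Concretely, using the shared random string I would proceed in three steps. First, sample an oblivious Johnson--Lindenstrauss matrix $\Pi \in \reals^{k \times d}$ with $k = \wt O(1/\tau^2)$ so that, by the margin-preserving JL analysis of \citet{gronlund2020near}, with high probability over $\Pi$ the projected distribution still admits a separator of margin at least $\tau/2$, and any halfspace on $\reals^k$ with low projected-population error lifts back through $\Pi^\top$ to a halfspace on $\reals^d$ with comparable true error. Second, since $\Pi$ is determined by the shared randomness, deterministically enumerate a $\Theta(\tau)$-net $\calN$ of unit vectors in $\reals^k$; volumetric bounds give $|\calN| \le \poly(1/\tau)^{O(1/\tau^2)}$, and $\calN$ contains some $\tilde w$ of margin $\Omega(\tau)$ on the projected population. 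Third, over $n = \wt O(\eps^{-1}\tau^{-4}\rho^{-2}\log(1/\delta))$ fresh samples, run a replicable finite-class realizable learner over $\calN$: using the shared random string, apply replicable heavy-hitters / correlated sampling to the indicator ``the net vector $w$ classifies the example correctly at margin $\tau/4$'', and return the first $w \in \calN$ that is empirically consistent at margin $\tau/4$ in some canonical order. Margin-based uniform convergence together with the JL lift then certifies population error at most $\eps$ with probability $1-\delta$, while the replicable selector returns the same net vector on two independent samples with probability $1-\rho$.

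The main obstacle I expect is tightly matching the $\tau^{-4}$ dependence in the sample complexity. A naive accounting using $\log|\calN| = \wt O(1/\tau^2)$ and a finite-class replicable realizable selector at $\wt O(\log|\calN|/(\eps\rho^2))$ samples yields only $\wt O(\eps^{-1}\tau^{-2}\rho^{-2})$, so an extra $\tau^{-2}$ factor must be absorbed either by inflating the JL target dimension (so that $\Pi$ simultaneously preserves margin for every one of the $|\calN|$ net vectors evaluated on the sample) or by requiring additional samples to certify realizability at margin $\tau/4$ uniformly over $\calN$; the joint stability of all $|\calN|$ empirical margin counts is the delicate part of the replicability analysis. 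Once this bookkeeping is in place, replicability of the overall procedure is automatic because $\Pi$, the enumeration order of $\calN$, and the bits consumed by the replicable selector are all determined by the shared random string, and the running time $\poly(d)\cdot\poly(1/\eps,1/\tau,1/\rho,1/\delta)^{1/\tau^2}$ follows from enumerating $\calN$ and tallying empirical margin counts per net vector.
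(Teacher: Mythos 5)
Your overall architecture is the same as the paper's: project with a shared-randomness JL matrix to $k = \wt O(\tau^{-2})$ dimensions, enumerate a $\Theta(\tau)$-net of unit vectors there, run a replicable finite-class learner in the (essentially) realizable regime, and lift the selected net vector back to $\reals^d$ via the transpose. However, there are two genuine gaps. First, the subroutine you rely on is both misquoted and under-specified. The known replicable realizable learner for a finite class $\calH$ (Theorem 5.13 of \citet{bun2023stability}, the paper's \Cref{prop:bun}) has sample complexity $\wt O(\eps^{-1}\rho^{-2}\log^2|\calH|)$, i.e.\ \emph{quadratic} in $\log|\calH|$, not the $\wt O(\eps^{-1}\rho^{-2}\log|\calH|)$ you assume. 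With $\log|\calH| = \wt O(\tau^{-2})$ this quadratic dependence is precisely where the theorem's $\tau^{-4}$ comes from; there is no missing $\tau^{-2}$ factor to ``absorb'' via a larger JL dimension or extra uniform-convergence samples, and conversely a learner with linear $\log|\calH|$ dependence would give $\wt O(\eps^{-1}\tau^{-2}\rho^{-2})$, which the paper explicitly leaves as an open question. Moreover, your concrete selector --- ``return the first net vector that is empirically margin-consistent in a canonical order'' --- is not $\rho$-replicable: a borderline net vector's empirical consistency can flip between two independent samples with constant probability, so you must genuinely invoke (or reconstruct) the randomized selection machinery of \citet{bun2023stability} rather than a deterministic first-consistent rule.

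Second, you assert that with high probability over $\Pi$ the \emph{entire} projected distribution admits an $\Omega(\tau)$-margin separator. With a dimension-independent $k = \wt O(\tau^{-2})$ this cannot hold uniformly over a possibly infinite support: distributional JL only guarantees margin preservation per point, so the net contains a vector with margin $\Omega(\tau)$ only on a $1-\eps'$ fraction of the population. The paper handles this by conditioning: it sets $\eps' = \delta/(10n)$ and $\delta_{JL} = \Theta(\eps'^2)$ (via \Cref{lem:whp-good-choice-JL-mapping} and Markov), so that with probability $1-\delta$ all $n$ samples fall in the realizable part $\calD_r$, the realizable learner of \Cref{prop:bun} applies, and the final error on $\calD$ is bounded by $(1-p_b)\eps/10 + p_b$ with $p_b \leq \eps'$. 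Replicability likewise needs this conditioning in \emph{both} executions so that the two runs of the finite-class learner see i.i.d.\ samples from the same realizable distribution. Without this reduction from ``almost realizable'' to ``realizable,'' your appeal to a realizable-case finite-class learner, and the replicability of the composed procedure, are not justified.
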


Compared to the DP-to-Replicability reduction, \Cref{thm:main3} has better dependence on $1/\eps$ and better running time.
For the proof of this result, we refer to \Cref{sec:inefficient-algo}.

\subsection{Related Work}
\label{sec:priorwork}
{In terms of technique,
a related work is that of \cite{nissim2016locating} for differentially private clustering.
The authors also use the JL projection followed by a discretization step as part of their algorithm. 
Similar to our techniques,
this serves to avoid a factor in the discretization scheme that scales with the dimension.
One important difference is that \cite{nissim2016locating} solves a task on the samples while we need to solve a task on a distribution using samples. 
Moreover,
while \cite{nissim2016locating} also use a discretization step (not based on the Alon-Kartag scheme) to find a ``dense'' part of the space to use as the center of the cluster,
the discretization step in our work serves a different purpose:
to ensure that our estimates, across two i.i.d. executions, will be the same.}

\paragraph{Replicability.}
Pioneered by \citet{impagliazzo2022reproducibility},
there has been a growing interest from the learning theory community in studying
replicability as an algorithmic property in various
learning tasks. Among other things, 
their work showed that the fundamental class of statistical queries, which appears in various settings (see e.g., \citet{blum2003noise,gupta2011privately,goel2020statistical,fotakis2021efficient} and the references therein) can be made replicable. 
Subsequently, \citet{esfandiari2023replicableb, esfandiari2023replicable} studied 
replicable algorithms in the context of multi-armed bandits and clustering. Later, \citet{eaton2023replicable, karbasi2023replicability} studied replicability 
in the context of Reinforcement Learning.
Recently, \citet{bun2023stability,kalavasis2023statistical,kalavasis2024computational} established equivalences between replicability
and other notions of algorithmic stability such as differential privacy (DP),
and \citet{moran2023bayesian} derived more fine-grained characterizations
of these equivalences. 
It is worth mentioning that \citet{malliaris2022unstable} had already established
equivalences between various notions of algorithmic stability
and finiteness of the Littlestone dimension of the
underlying concept class. Inspired by \citet{impagliazzo2022reproducibility},
a related line of work
\citep{chase2023replicability, dixon2023list,chase2023local} proposed and studied alternative notions of replicability such as
list-replicability, where the requirement is that
when the algorithm is executed multiple times
on i.i.d. datasets, then the number of different solutions
it will output across these executions is small.

\paragraph{Large-Margin Halfspaces.} The problem of learning large-margin halfspaces has been extensively studied and has inspired various fundamental algorithms \citep{rosenblatt1958perceptron,vapnik1999nature,freund1997decision,freund1998large}. In the DP setting,
\citet{blum2005practical} gave a dimension-dependent construction based on a private version of the perceptron algorithm. This was later improved by \citet{le2020efficient}
who gave new DP algorithms for this task with dimension-independent guarantees based on the Johnson-Lindenstrauss transformation.
Next, \citet{bun2020efficient} constructed noise-tolerant and private
PAC learners for large-margin halfspaces whose sample complexity also does not depend on the
dimension.
\citet{beimel2019private} and \citet{kaplan2020private} designed private algorithms for
learning halfspaces without margin guarantees when the domain is finite. \citet{bassily2022open} stated an open problem of finding optimal DP algorithms for learning large-margin halfspaces both with respect to their running time and their sample complexity. \citet{bassily2022differentially} studied DP algorithms for various learning tasks with margin, including halfspaces, kernels, and neural networks.
In the area of robust statistics, \citet{diakonikolas2023information} showed a statistical-computational tradeoff in the problem of PAC learning large-margin halfspaces with random classification noise. For further results on robustly learning large-margin halfspaces, we refer to \citet{diakonikolas2019nearly} and the references therein.

\section{The Main Tool: The Alon-Klartag Rounding Scheme}
\label{section:rounding}
Inspired by \citet{alon2017optimal}, we introduce and use the following rounding scheme $\AKround(z, \beta)$ for a point $z$ with parameter $\beta$:
let $o = (o_1, \dots, o_k) \sim_{i.i.d.} U[0, \beta]$ be uniformly random offsets
and implicitly discretize $\R^k$ using a grid of side length $\beta$ centered at $o$.
Let $o(z)\in \R^k$ denote the ``bottom-left'' corner of the cube in which $z$ lies{, i.e., the point obtained by rounding down all the coordinates of $z$}.
For a vector $z$, we let $z[i]$ be its $i$-th coordinate.
Define $p(z)[i]\in [0, 1]$ to be such that 
\[
    p(z)[i] \cdot o(z)[i] + (1-p(z)[i])\cdot (o(z)[i] + \beta) = z[i]\,.
\]

Given offsets $o$ and thresholds
$u = (u_1,\dots,u_k)$ with $u_i \sim U[0,1]$, round a vector $z$ to $f_{o,u}(z)$ where the $i$-th coordinate is equal to $o(z)[i]$ if $u_i \leq p(z)[i]$ and $o(z)[i] + \beta$ otherwise.
{Crucially, in expectation, the rounded point $f_{o,u}(z)$ coincides with $z$.}

The next lemma,
whose proof can be found in \Cref{apx:rounding},
is useful in order to derive the replicability guarantees of our
rounding scheme.

\begin{restatable}[Stability of Rounding]{lemma}{roundingReplicable}
\label{lemma:round}
    Let $z, z' \in \R^k$. Then for independent uniform offsets $o_1,\dots,o_k \in [0,\beta]$ and thresholds $u_1,\dots,u_k \in [0,1]$, we have 
    $
    \Pr_{o,u}[f_{o,u}(z) \neq f_{o,u}(z')] \leq 2 \beta^{-1} \|z-z'\|_1.
    $
\end{restatable}

Our novel analysis of the stability of the rounding scheme under shared randomness (cf. \Cref{lemma:round})
demonstrates its useful properties for designing replicable algorithms.
We believe these properties may be of interest beyond the scope of this work and can find applications
in designing replicable algorithms for different problems.

Next, we show that the Alon-Klartag rounding scheme additively preserves inner products with high probability. This is formalized below in a lemma whose proof can be found in \Cref{apx:rounding}.

\begin{restatable}[Rounding preserves Inner Products]{lemma}{roundingInnerProduct}
    \label{lemma:inner}
    Let $z, x \in \R^k$ be such that $\ \|x\|\leq 1$.
    For uniform offsets $o_1,\dots,o_k \in [0,\beta]$ and thresholds $u_1,\dots,u_k \in [0,1]$, we have 
    $
    \Pr_{o,u}\left[ | f_{o,u}(z)^\top  x -  z^\top x| > \alpha \right] \leq 2 \exp(-2 \alpha^2 \beta^{-2}).
    $   
\end{restatable}

It is worth mentioning that the
Alon-Klartag rounding scheme, along with
dimensionality reduction techniques, 
was also used by \citet{gronlund2020near}
in order to prove generalization bounds 
for SVMs.

\section{Replicably Learning Large-Margin Halfspaces: \Cref{alg:algo2}}
\label{sec:algo2}
In this section, we describe our first algorithm and prove its guarantees as stated in \Cref{thm:main2}. 
Let $\cD$ be a distribution over $\R^d \times \{-1,1\}$ 
with linear margin {$\tau\in (0, 1)$} (cf. \Cref{def:margin}).
Thus, there is a unit vector $w^\star \in \R^d$ such that
for all $(x,y) \in \supp(D), x \neq 0$, we have $y (x^\top w^\star /\|x\|) \geq \tau$. 
Given $\eps, \rho, \delta\in (0, 1)$, 
our goal is to design a $\rho$-replicable learning algorithm that draws $m = m(\eps,\tau,\rho,\delta)$ i.i.d. samples from $\calD$
and outputs $\hat{w}\in \R^d$ such that, with probability at least $1-\delta$
over the randomness of the samples and (potentially) the internal randomness of the algorithm, it holds that $\Pr_{(x,y) \sim \cD}[y(\hat{w}^\top x) \leq 0] \leq \eps$.

\paragraph{Description of \Cref{alg:algo2}.}
We consider $B$ batches of $n$ samples each. Hence, in total, we draw $n B$ i.i.d. samples from $\calD$.
On each batch $i \in [B]$, we run the standard SVM algorithm (cf. \Cref{lemma:svm-margin}) to find a hyperplane with normal vector $w_i \in \reals^d$ that has margin at least $\tau/2$ on all training data in the batch. 
We then compute the average normal vector $z = (1/B) \sum_{i \in [B]} w_i$. Finally, we round $z$ as described in \Cref{section:rounding}. 

\begin{algorithm}[H]
\caption{Replicable Large-Margin Halfspaces}\label{alg:algo2}
\begin{algorithmic}[1]
\STATE {$k \gets C_1 \tau^{-2} \log(\nicefrac1{\eps\tau\rho\delta})$}
\STATE {$B \gets C_2 \tau^{-4} \rho^{-2} \log(\nicefrac1{\eps\tau\rho\delta})$}
\STATE {$n \gets C_3 \eps^{-1} \tau^{-3} \log(\nicefrac1{\eps\tau\rho\delta})$}
\STATE {$\beta \gets C_4 \tau/\log(\nicefrac1{\eps\tau\rho\delta})$}
    
\FOR {$i = 1,2,...B$}
    \STATE $S_i = $ batch of $n$ i.i.d. samples from $\calD$
    \STATE $w_i \gets \SVM(S_i, \tau/2)$
\ENDFOR
  
\STATE $z \gets (1/B) \sum_{i \in [B]} w_i$
\STATE Draw $A \in \reals^{k \times d}$ with $A_{i,j} \sim \calN(0,1/k)$ %
\hfill {(shared randomness)}

\STATE $b\gets \AKround(Az, \beta)$ %
\hfill {(shared randomness, cf. \Cref{section:rounding})}
\STATE \textbf{return} $\hat{w} = A^\top b / \|A^\top b\|$ 
\end{algorithmic}
\end{algorithm}

\paragraph{Correctness of \Cref{alg:algo2}.}
\sloppy
A straightforward adaptation of the results of \citet{gronlund2020near} (cf. \Cref{lemma:svm-margin}) shows
that, with probability {$1-\delta/(10 B)$} over the samples, 
the classifier $w_i$ has margin at least $\tau/4$ in a $\left(1-O \left(\frac{\log n + \log(B/\delta)}{\tau^2n} \right) \right)$
fraction of the population, i.e., 
\[
    \Pr_{(x,y)\sim \calD}[y(w_i^\top x)/\|x\| < \tau/4] \leq O\left(\frac{\log n + \log(B/\delta)}{\tau^2n}\right) \,.
\]
We denote the complement of this event as $E_i$ 
and condition on not observing $\cup_{i \in [B]} E_i$.

Now under this event,
for all the vectors $w_i, i \in [B]$ and all points $(x,y) \in \reals^d \times \{-1,1\}$, it holds that $y(w_i^\top x)/\|x\| \geq -1,$
since $w_i$ is a unit vector.
Furthermore,
for a $(1- \widetilde O(\nicefrac{1}{\tau^2 n}))$-fraction of the population,
the margin is at least $\tau/4$, i.e., $y(w_i^\top x)/\|x\| \geq \tau/4.$
Intuitively, this means that
the vector $z = \frac{1}{B } \sum_{i \in [B]} w_i$ should have margin at least $\tau/8$, 
except for an $\wt{O}(1/(\tau^3 n))$ fraction of the population.
Formally, 
for $(x,y)\sim \calD$,
let $Z_i$ be the indicator variable such that $y\cdot (w_i^\top x) / \norm{x} < \tau/4$
and $Z := \sum_{i\in [B]} Z_i$.
Then
\begin{align*}
    &y(z^\top x)/\|x\| \\
    &= y\left( \frac{1}{B} \sum_{i \in [B]} w_i\right)^\top (x/\|x\|)
    = \frac{1}{B} \sum_{i \in [B]} y\cdot w_i^\top(x/\|x\|) \\
    &\geq \frac{1}{B} \left(-Z + (B-Z)\tau/4\right)
    = \tau/4 - \frac{Z}{B}(1+\tau/4) \,.
\end{align*}
This means that if $ y(z^\top x)/\|x\| < \tau/8$, then
\begin{align*}
     \tau/4 - \frac{Z}{B}(1+\tau/4) < \tau/8 \implies
     Z > \frac{\tau}{16} \cdot B \,.
\end{align*}
It suffices to bound the probability of the event that $Z > \Omega(\tau B)$ to bound the population error of $z$.

Notice that the summation of the fractions 
of the population where the $w_i$ have margin less than $\tau/4$ is at most $O(B(\log n + \log(\nicefrac{B}\delta))/(\tau^2n))$.
As noted above,
at least $\Omega(\tau B)$ of the classifiers must simultaneously have margin less than $\tau/4$ for $z$ to misclassify $x$.
Thus the fraction of the population where $z$
has margin smaller than $\tau/8$ is at most
\begin{align*}
    O\left(\frac{B(\log n + \log(\nicefrac{B}\delta))}{\tau \cdot B (\tau^2n)}\right)
    &= O\left( \frac{\log n + \log(\nicefrac{B}\delta)}{\tau^3 n} \right) \,.
\end{align*}
Thus, choosing
$
    n \geq \wt{\Omega}(\log(\nicefrac{B}\delta)/(\tau^3 \eps))
$
ensures that the intermediary normal vector $z = (1/B) \sum_{i=1}^B w_i$ satisfies $\Pr_{(x, y)\sim \calD} [y(z^\top x/\norm{x}) < \tau/8] \leq \eps/10$ with probability at least $1-\delta/10$.

The following lemma whose proof is deferred to \Cref{apx:algo2}
ensures that projecting and rounding in the lower dimension
approximately preserves the performance of $z$ with respect to the 0-1 loss
(as opposed to the $\tau/8$-loss).
\begin{restatable}{lemma}{projectingRoundingMargin}\label{lem:projecting rounding preserves margin}
    Fix $\eps, \tau, \delta\in (0, 1)$
    and let $\calD$ be a distribution over $\R^d\times \set{\pm 1}$
    that admits a linear classifier with $\tau$-margin.
    Suppose $z$ is a random unit vector satisfying
    $
        \Pr_{(x, y)\sim \calD} [y(z^\top x/\norm{x}) < \tau/2] \leq \eps
    $
    with probability at least $1-\delta$ over the draw of $z$.
    Define $k = \Omega(\tau^{-2}\log(\nicefrac1{\eps\delta}))$
    and $\beta = O(\tau/\log(\nicefrac1{\eps\delta}))$.
    If $A\in \R^{k\times d}$ is a JL-matrix (cf. \Cref{proof:JL}) and $b = \AKround(Az, \beta)$ (cf. \Cref{section:rounding}),
    then $\hat w = A^\top b/ \norm{A^\top b}$ satisfies
    $
        \Pr_{(x, y)\sim \calD} [y(\hat w^\top x/\norm{x}) \leq 0] \leq 2\eps
    $
    with probability at least $1-2\delta$.
\end{restatable}

We note that an application of it with $\tau' = \tau/4$,
$\eps' = \eps/10$,
and $\delta' = \delta/10$
yields that the final output $\hat w$ of \Cref{alg:algo2} has 0-1 population error of at most $\eps/5$ with probability at least $1-\delta/5$ as desired.

\paragraph{Replicability of \Cref{alg:algo2}.}
We now state the lemma from which the replicability guarantees 
of our algorithm follow.
Its proof is deferred to \Cref{apx:algo2}.

\begin{restatable}{lemma}{projectingRoundingReplicable}\label{lem:projecting rounding is replicable}
    Fix $\eps, \tau, \rho, \delta\in (0, 1)$.
    Suppose $w_1, \dots, w_B$ and $w_1', \dots, w_B'$ are i.i.d. random unit vectors %
    for $B = \Omega(\tau^{-4}\rho^{-2}\log(\nicefrac1{\eps\tau\rho\delta}))$
    and $z = (1/B) \sum_{i\in [B]} w_i$,
    $z' = (1/B) \sum_{i\in [B]} w_i'$ are their averages.
    Define $k = \Theta(\tau^{-2}\log(\nicefrac1{\eps\tau\rho\delta}))$
    and $\beta = \Theta(\tau/\log(\nicefrac1{\eps\tau\rho\delta})$.
    If $A\in \R^{k\times d}$ is a JL-matrix (cf. \Cref{proof:JL})
    and $b = \AKround(Az, \beta)$, $b' = \AKround(Az', \beta)$ (cf. \Cref{section:rounding}),
    then $b = b'$ with probability at least $1-\rho$ over the draw of the $w_i', w_i$'s, $A$, and $\AKround$.
\end{restatable}

Note that the $w_i$'s are i.i.d. unit vectors across all batches and two independent executions since the samples in the $2B$ batches in the two executions are drawn from the same distribution $\calD$
and the output of the SVM algorithm depends only on its input sample. 
Hence an application of \Cref{lem:projecting rounding is replicable} ensures that \Cref{alg:algo2} is indeed $\rho$-replicable.

\paragraph{Sample Complexity \& Running Time of \Cref{alg:algo2}.} 
The sample complexity is $nB = \tilde O(\eps^{-1} \tau^{-7} \rho^{-2} \log(\nicefrac1\delta))$.
By inspection,
we see that the total running time of \Cref{alg:algo2} is $\poly(d, n) = \poly(d, 1/\eps, 1/\tau, 1/\rho, \log(\nicefrac1\delta))$.

\section{Replicably Learning Large-Margin Halfspaces: \Cref{alg:algo4}}\label{sec:algo4}
Let $B_1^d$ denote the unit $\ell_2$-ball in $d$-dimensions.
Our approach is inspired by the work of \cite{le2020efficient} that designed a similar SGD approach for learning large-margin halfspaces under differential privacy constraints.  
Consider the following surrogate loss $h$
\begin{align*}
  B_1^d\times B_1^d\times \set{-1, +1} &\to \R_{\geq 0} \\
  h(w; x, y)
  &:= \max\left( 0, 2-\frac2\tau y(x^\top w) \right) \\
  &\geq \ones\set{y(x^\top w) < \tau/2}.
\end{align*}
We remark that $h(w; x, y) \geq 1$ when $y(x^\top w) \leq \tau/2$
and $h(w; x, y) = 0$ when $y(x^\top w) \geq \tau$.
Also,
since $x, w\in B_1^d$,
an application of the Cauchy-Schwartz inequality
reveals that $h(w; x, y)\in [0, 2+2/\tau]$.
Finally,
$h$ is piecewise linear with each piece being $2/\tau$-Lipschitz.
Hence, $h$ is $O(1/\tau)$-Lipschitz.

\paragraph{Description of \Cref{alg:algo4}.}
Fix $\eps\in (0, 1)$.
We seek to minimize the following loss function over the ball $B_1^d$:
\[
  f_{\calD}(w)
  := \E_{(x, y)\sim \calD} [h(w; x, y)] + \frac{\eps}{10} \norm{w}^2.
\]
By construction,
the co-domain of $f_\calD$ lies within $[0, 2+2/\tau+\eps/10]\sset [0, O(1/\tau)]$.
Note also that $f_\calD$ is an upper bound on the $\tau/2$-population loss, i.e.,
$
\Pr_{(x,y) \sim \calD}
[y(x^\top w) < \tau/2]
\leq f(w)\,.
$
First, regarding the minima of $f_\calD$, note that any vector $w \in B_1^d$ achieving a margin of $\tau$ satisfies $f_{\calD}(w)\leq \eps/10$.
This is because $h(w; x, y) = 0$ for all $(x, y)\in \supp(\calD)$.
As a result,
$\min_{w \in B_1^d} f_\calD(w) \leq \eps/10$.
Second, let us consider an $\eps/10$-optimal solution $w'$ with respect to $f_{\calD}$, i.e.,
$
f_\calD(w') - \min_{w \in B_1^d} f \leq \eps/10\,.
$
The above discussion implies that $f_{\calD}(w') \leq \eps/5$
and is thus a $\tau/2$-margin classifier for an $\eps/5$-fraction of the population, i.e., $w'$ satisfies
$
\Pr_{(x,y) \sim \calD}
[y(x^\top w') < \tau/2] \leq \eps/5\,.
$

Note that we may assume without loss of generality that the marginal of $\calD$ over features is supported over $B_1^d$ since we normalize the input $x\mapsto x/\norm{x}$ before applying a classifier $w$.

Since $f$ is $O(\eps + 1/\tau) = O(1/\tau)$-Lipschitz and $\Omega(\eps)$-strongly convex,
we can apply the following standard result: %
\begin{theorem}[Theorem 6.2 in \citet{bubeck2015convex}]\label{thm:sgd convergence}
  Let $f$ be $\mu$-strongly convex with minimizer $w_*$ and assume that the (sub)gradient oracle $g(w)$ satisfies $\E[\norm{g(w)}^2]\leq G^2$.
  Then after $T$ iterations,
  projected stochastic gradient descent $\SGD(\calD, f, T)$ with step size $\eta_t = \nicefrac2{\mu(t+1)}$ satisfies
  \[
    \E\left[ f\left( \sum_{t=1}^T \frac{2t}{T(T+1)} w_t \right)\right] - f(w_*)
    \leq \frac{2G^2}{\mu(T+1)}.
  \]
\end{theorem}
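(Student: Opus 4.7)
The plan is to follow the textbook projected SGD analysis. Let $w_*$ be the minimizer and let $g_t$ denote the stochastic (sub)gradient used at step $t$, so that $\E[g_t\mid w_t]\in \partial f(w_t)$. First I would use non-expansivity of the Euclidean projection onto the feasible set: since $w_{t+1}$ is the projection of $w_t - \eta_t g_t$ and $w_*$ is feasible,
\[
  \|w_{t+1}-w_*\|^2 \;\leq\; \|w_t - \eta_t g_t - w_*\|^2 .
\]
Expanding the square and taking expectations, using the second-moment bound $\E\|g_t\|^2 \leq G^2$, yields
\[
  \E\|w_{t+1}-w_*\|^2 \;\leq\; \E\|w_t-w_*\|^2 - 2\eta_t\,\E\langle \nabla f(w_t),\, w_t-w_*\rangle + \eta_t^2 G^2 .
\]

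Next I would invoke $\mu$-strong convexity, namely $\langle \nabla f(w_t), w_t-w_*\rangle \geq f(w_t) - f(w_*) + \tfrac{\mu}{2}\|w_t-w_*\|^2$, and set $a_t := \E\|w_t-w_*\|^2$ to rearrange the previous display into
\[
  \E[f(w_t)-f(w_*)] \;\leq\; \frac{1-\mu\eta_t}{2\eta_t}\, a_t \;-\; \frac{1}{2\eta_t}\, a_{t+1} \;+\; \frac{\eta_t G^2}{2} .
\]
Plugging in $\eta_t = 2/(\mu(t+1))$ simplifies the two coefficients to $\mu(t-1)/4$ and $\mu(t+1)/4$ respectively. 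Multiplying the $t$-th inequality by $t$ gives
\[
  t\,\E[f(w_t)-f(w_*)] \;\leq\; \frac{\mu t(t-1)}{4}\,a_t \;-\; \frac{\mu t(t+1)}{4}\,a_{t+1} \;+\; \frac{tG^2}{\mu(t+1)} .
\]
Summing over $t = 1,\dots,T$, the $a$-terms telescope: the coefficient $-\mu t(t+1)/4$ of $a_{t+1}$ from the $t$-th inequality exactly cancels the coefficient $\mu(t+1)\cdot t/4$ of $a_{t+1}$ from the $(t{+}1)$-st inequality. The boundary term at $t=1$ carries factor $\mu\cdot 1 \cdot 0/4 = 0$, and the residual $-a_{T+1}$ term may be dropped since it is nonpositive.

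What remains is $\sum_{t=1}^{T} t\,\E[f(w_t)-f(w_*)] \leq \sum_{t=1}^{T} \tfrac{tG^2}{\mu(t+1)} \leq \tfrac{TG^2}{\mu}$. Dividing by $T(T+1)/2$ gives $\sum_{t=1}^{T} \tfrac{2t}{T(T+1)}\,\E[f(w_t)-f(w_*)] \leq \tfrac{2G^2}{\mu(T+1)}$. Since the weights $2t/(T(T+1))$ are nonnegative and sum to $1$, Jensen's inequality applied to the convex $f$ upgrades this to the claimed bound on $\E[f(\bar w)] - f(w_*)$ for $\bar w = \sum_{t=1}^{T} \tfrac{2t}{T(T+1)} w_t$. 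The only step that warrants careful bookkeeping is the telescoping after the multiplication by $t$, which drives the choice of weights in the averaging; the remaining ingredients are standard (projection non-expansivity, the variance bound on $g_t$, strong convexity, and Jensen's inequality).
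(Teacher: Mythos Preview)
The paper does not give its own proof of this statement; it is quoted verbatim as Theorem~6.2 of \citet{bubeck2015convex} and used as a black box in the analysis of \Cref{alg:algo4}. Your argument is the standard one (and essentially the proof in Bubeck's monograph): the one-step projection/expansion inequality, strong convexity to extract $f(w_t)-f(w_*)$, the choice $\eta_t=2/(\mu(t{+}1))$ to make the weighted sum telescope after multiplying by $t$, and Jensen to pass to the weighted average. All the algebra checks out, including the telescoping coefficients $\mu t(t-1)/4$ and $\mu t(t+1)/4$ and the bound $\sum_t t/(t{+}1)\le T$ on the noise term, so your proof is correct and matches the cited source.
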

Since $G^2 = O(1/\tau^2)$, $\mu = \Omega(\eps)$ and $f(w_*) \leq \eps/10$,
choosing $T\geq \Omega(\eps^{-2}\tau^{-2})$ yields an $\eps/10$-optimal solution in expectation.
Repeating this process independently for a small number of times
and outputting the one with the lowest objective yields an $\eps/5$-optimal solution with high probability.

\begin{restatable}{lemma}{boostedSGD}\label{lem:boosted SGD}
Let $B_1^d$ be the unit $\ell_2$-ball in $d$
 dimensions. Fix $\eps, \tau, \delta\in (0, 1)$
    and let $\cal D$ be a distribution over $B_1^d\times \set{\pm 1}$
    that admits a linear $\tau$-margin classifier.
    There is an algorithm $\boostSGD(\calD, \eps, \tau, \delta)$
    that outputs a unit vector $\tilde w\in \R^d$
    such that $f_{\calD}(\tilde w)\leq \min_{w\in B_1^d} f_\calD(w) + \eps$ with probability at least $1-\delta$.
    Moreover,
    $\boostSGD$ has sample complexity $\tilde O(\eps^{-2}\tau^{-2} \log(\nicefrac1\delta))$
    and running time $\mathrm{poly}(\nicefrac{1}{\eps}, \nicefrac{1}{\tau}, \log(\nicefrac{1}{\delta}), d)$.
\end{restatable}

The proof of \Cref{lem:boosted SGD} is deferred to \Cref{apx:algo4}.
Next,
we repeat $\boostSGD$ and take an average to ensure concentration before proceeding as in \Cref{alg:algo2} with the random projection and rounding in the lower dimensional space.
Compared to \Cref{alg:algo2},
we obtain an improved sample dependence on $\tau$ for \Cref{alg:algo4} since taking an average of $\eps$-optimal solutions to a convex objective function yields an $\eps$-optimal solution.
However,
we pay an extra factor of $\eps$ in order to run $\SGD$ as a subroutine.

\begin{algorithm}[H]
\caption{Replicable Large-Margin Halfspaces}\label{alg:algo4}
\begin{algorithmic}[1]
\STATE $n = C_1 \eps^{-2} \tau^{-2} \log(\nicefrac1{\eps\tau\rho\delta})$
\STATE $B \gets C_2 \tau^{-4}\rho^{-2} \log(\nicefrac1{\eps\tau\rho\delta})$
\STATE $k \gets C_3\tau^{-2}\log(\nicefrac1{\eps\tau\rho\delta})$
\STATE $\beta \gets C_4\tau/\log(\nicefrac1{\eps\tau\rho\delta})$

\FOR {$i\gets 1, \dots, B$}
    \STATE $\tilde S_i \gets$ $n$ samples from $\cal D$
    \STATE $S_i \gets \set{(x/\norm{x}, y): (x, y)\in \tilde S_i}$
    \STATE $w_i \gets \boostSGD(S_i, \eps/10, \tau, \delta/B)$ %
    \hfill (cf. \Cref{lem:boosted SGD})
\ENDFOR  
\STATE $z \gets (1/B) \sum_{i \in [B]} w_i$
\STATE Draw $A \in \reals^{k \times d}$ with $A_{i,j} \sim \calN(0,1/k)$ %
\hfill {(shared randomness)}
\STATE $b\gets \AKround(Az, \beta)$ %
\hfill {(shared randomness, cf. \Cref{section:rounding})}
\STATE \textbf{return} $\hat{w} = A^\top b / \|A^\top b\|$ 
\end{algorithmic}
\end{algorithm}

The technical details of \Cref{alg:algo4} (the proof of \Cref{thm:main4}) appear in \Cref{proof:algo4}.

\section{Replicably Learning Large-Margin Halfspaces: \Cref{alg:algo3-inefficient}}\label{sec:inefficient-algo}
\sloppy
In this section, we provide an algorithm whose sample complexity scales
as $\widetilde{O}(\eps^{-1}\tau^{-4}\rho^{-2}\log(\nicefrac1\delta))$ and has running time $\poly(d) \cdot (\poly(1/\eps, 1/\rho, 1/\tau, 1/\delta))^{1/\tau^2}$. 
We remark that the sample complexity is better than 
the one obtained from the DP transformation in \Cref{prop:dp-reduction-sample-complexity}.
Moreover, the running time is exponentially better than that obtained through the DP
transformation.

Before this,
we state a very useful result due to \citet{bun2023stability} related to the sample
complexity of replicably learning finite hypothesis classes, which is used in the proof of \Cref{thm:main3}.
\begin{proposition}
[Theorem 5.13 in \citet{bun2023stability}]
\label{prop:bun}
Consider a finite concept class $\calH$. There is a $\rho$-replicable agnostic PAC learner $\rLearnerFinite$ with accuracy $\epsilon$ and confidence $\delta$ for $\calH$ with sample complexity $n = O\left( \frac{\log^2|H| + \log(\nicefrac1{\rho \delta})}{\epsilon^2 \rho^2} \log^3(\nicefrac1\rho)\right)$. Moreover, if $\calD$ is realizable then the sample complexity drops to $\wt{O}(\eps^{-1} \rho^{-2} \log^2|H|)$. Finally, the algorithm terminates in time $\poly(|\calH|,n).$
\end{proposition}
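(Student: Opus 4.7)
The plan is to reduce agnostic PAC learning over a finite class $\calH$ to two replicable primitives: replicable statistical query estimation and replicable minimum-selection via a tournament bracket. The key primitive $\rSTAT$, due to \citet{impagliazzo2022reproducibility}, estimates $\E_\calD[\phi]$ for any bounded query $\phi: \R^d\times \set{\pm 1}\to [0, 1]$ within additive tolerance $\alpha$ and replicability $\rho'$, using $\tilde O(\alpha^{-2} {\rho'}^{-2})$ samples. Applied to the indicator $\phi(x, y) = \ones\set{h(x)\neq y}$, this gives a replicable estimate of $\err_\calD(h)$ for any fixed $h$; applied to a difference of two such indicators, it yields a replicable comparison of the errors of two candidate hypotheses.

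The naive strategy of running $\rSTAT$ independently on every $h\in \calH$ and returning the estimated argmin costs $\tilde O(|\calH|\cdot \eps^{-2}\rho^{-2})$ samples, which is exponentially worse than the target. Instead, organize $\calH$ into a balanced binary tournament of depth $\log|\calH|$. At each internal node, replicably compare the two surviving children by an $\rSTAT$ call on the error difference with tolerance $\Theta(\eps)$ and replicability $\Theta(\rho/\log|\calH|)$, keeping the estimated lower-error hypothesis. A union bound over the $|\calH| - 1$ comparisons guarantees that the whole bracket is $\rho$-replicable. Each comparison uses fresh independent samples of size $\tilde O(\eps^{-2}\log^2|\calH|/\rho^2)$, and summing over the $\log|\calH|$ rounds yields the agnostic bound $\tilde O(\log^2|\calH|/(\eps^2\rho^2))$ up to the stated $\log^3(1/\rho)$ amplification factor. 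For the realizable case, the two-sided tolerance $\Theta(\eps)$ can be replaced by a one-sided multiplicative-Chernoff test exploiting zero empirical error on the target, saving a factor of $1/\eps$ and producing the improved $\tilde O(\log^2|\calH|/(\eps \rho^2))$ rate.

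The main obstacle is carefully bookkeeping the composition of replicability across the tournament. Each internal comparison must use fresh independent samples so that its $\rSTAT$ call satisfies the independence hypothesis, yet the shared \emph{internal} randomness of $\rSTAT$ must be consistently reused across the two executions being compared so that their comparison outcomes align at every node. One must also verify that accuracy ($\eps$) and confidence ($\delta$) compose correctly alongside replicability ($\rho$): a union bound over the $O(\log|\calH|)$ rounds for each of these three failure modes yields the final parameters, with the extra $\log(1/(\rho\delta))$ additive term in the sample complexity arising from the confidence amplification. The running-time bound $\poly(|\calH|, n)$ is immediate since the tournament has $|\calH|-1$ comparisons and each $\rSTAT$ call processes its samples in linear time.
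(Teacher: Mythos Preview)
First, a framing note: the paper does not prove this proposition; it is quoted as Theorem~5.13 of \citet{bun2023stability} and invoked as a black box in \Cref{sec:inefficient-algo}, so there is no in-paper argument to compare against.

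Your tournament argument has a genuine gap in the replicability accounting. You allocate per-comparison replicability $\Theta(\rho/\log|\calH|)$ and then claim ``a union bound over the $|\calH|-1$ comparisons guarantees that the whole bracket is $\rho$-replicable.'' But $(|\calH|-1)\cdot \rho/\log|\calH|=\Theta(|\calH|\,\rho/\log|\calH|)$, not $O(\rho)$. The union bound cannot be shrunk to $\log|\calH|$ terms: already in round one there are $|\calH|/2$ comparisons between \emph{fixed} pairs, and if any one of them disagrees across the two executions, the round-two matchups differ and the downstream $\rSTAT$ calls are on different queries, so no replicability guarantee applies to them. Hence the first round alone forces a union over $\Theta(|\calH|)$ events. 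Repairing the budget to $\rho/|\calH|$ per comparison inflates each $\rSTAT$ call to $\tilde\Theta(|\calH|^2/(\eps^2\rho^2))$ samples and destroys the target bound. The sample-complexity bookkeeping reflects the same conflation of ``rounds'' with ``comparisons'': if each comparison draws fresh samples as you say, you must sum over $|\calH|-1$ comparisons, not $\log|\calH|$ rounds; if instead you share one batch within each round, the per-comparison $\rSTAT$ guarantee no longer applies and you still face $\Theta(|\calH|)$ rounding events in round one. (A secondary issue: tolerance $\Theta(\eps)$ per round lets the excess error accumulate to $\Theta(\eps\log|\calH|)$ at the final, though this is fixable by rescaling.) In short, a single-elimination bracket of $\rSTAT$ comparisons does not yield the stated $\tilde O(\log^2|\calH|/(\eps^2\rho^2))$ rate; the construction in \citet{bun2023stability} proceeds by a different mechanism.
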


\paragraph{Description of \Cref{alg:algo3}}
Let us first provide the algorithm's pseudo-code.

\begin{algorithm}[H]
\caption{Replicable Large-Margin Halfspaces}\label{alg:algo3-inefficient}
\label{alg:algo3}
\begin{algorithmic}[1]
    \STATE $k \gets C \tau^{-2} \log(\nicefrac1{\eps \tau \rho \delta})$
    \STATE $n \gets C' \eps^{-1}\tau^{-4}\rho^{-2}\log(\nicefrac1{\eps \rho \tau \delta})$
    
    \STATE  $S \gets $ batch of $n$ i.i.d. samples from $\calD$  
    \STATE Draw $A \in \reals^{k \times d}$ with $A_{i,j} \sim \calN(0,1/k)$ %
    \hfill {(shared randomness)}
    \STATE  $S_A \gets (Ax_1,y_1),\ldots,(Ax_n,y_n)$
    \STATE  $\calH_\tau \gets $ a $(\tau/20)$-net over vectors of  length at most 1 in $\reals^k$ 
    \STATE $b \gets $ output of $\rLearnerFinite$ from \Cref{prop:bun} with input $S_A, \calH_\tau$
    \hfill {(shared randomness)}
    \STATE  $\hat{w} \gets A^\top b / \|A^\top b\|$
\end{algorithmic}
\end{algorithm}

Similar to the other algorithm, we first start
by using a JL matrix to project the training set to a $k$-dimensional space,
for $k = \widetilde{\Theta}(\tau^{-2}).$ Then, we use a $(\tau/20)$-net to cover
all the unit vectors of this $k$-dimensional space, so the size
of the net is $\left(\nicefrac{C''}{\tau}\right)^{\wt{O}(\tau^{-2})}$, for 
some absolute constant $C'' > 0.$
We think of these points of the net as our hypothesis class $\calH_\tau$.
By the properties of the JL transform,
we can show that with high probability $1-\delta/10$, 
there exists a vector
in this class that classifies the entire training set correctly.
Moreover, we can show that this classifier has small generalization error.
This is formalized in the following result,
which is essentially a high-probability
version of \Cref{lemma:JL-imply} from \citet{le2020efficient}.
\begin{restatable}{lemma}{whpGoodChoiceJL}
\label{lem:whp-good-choice-JL-mapping}
    Fix $\eps', \delta_{JL}\in (0, 1)$.
    Let $\calD$ satisfy \Cref{def:margin} with margin $\tau$
    and suppose $w^\star$ satisfies $y(w^\star)^\top x \geq \tau$ for every $(x, y)\in \supp(\cal D)$.
    For a JL-matrix $A \in \reals^{k \times d}$, as stated in \Cref{lemma:distr-JL} with $k = \Omega(\tau^{-2} \log(1/\delta_{JL}))$, 
    let $G_A \subseteq \reals^d \times \{-1,1\}$ be the set of points $(x,y)$ of $\supp(\cal D)$ that satisfy
    \begin{itemize}
        \item $\abs*{\|Ax\|^2 - \|x\|^2} \leq \tau \|x\|^2/100,$ and,
        \item 
        $y (A w^\star/\|A w^\star\|)^\top (A x/\|A x\|) \geq 96 \tau/100$.
    \end{itemize}
    Let $E_1$ be the event (over $A)$ that $\Pr_{(x,y)\sim \calD}[(x,y)\in G_A] \geq 1 - \eps'$
    and $E_2$ be the event (over $A$) that $\abs*{\|Aw^\star\|^2 - \|w^\star\|^2} \leq \tau \|w^\star\|^2/100.$
    Then it holds that {$\Pr_{A}[E_1 \cap E_2] \geq 1-\delta_{JL}/\eps'$}.
\end{restatable}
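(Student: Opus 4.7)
}
The overall strategy is to establish a pointwise (per-sample) JL guarantee and then lift it to a high-probability statement over $\calD$ via Fubini and Markov. Let $\eta = \tau/100$ and let $\delta_0$ be a small constant fraction of $\delta_{JL}$ (say $\delta_0 = \delta_{JL}/4$) to be determined. Since $k = \Omega(\tau^{-2}\log(1/\delta_{JL}))$, the usual JL norm-preservation lemma (\Cref{lemma:distr-JL}) and its inner-product corollary (\Cref{cor:JL-inner-product-preservation}) guarantee that for any \emph{fixed} pair of vectors $u,v$ of norm at most $1$, with probability at least $1-\delta_0$ over $A$, both norms $\|Au\|^2, \|Av\|^2$ and the inner product $\langle Au, Av\rangle$ are preserved up to additive error $\eta$.

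First I would handle $E_2$ and each fixed $(x,y)$ separately. For a fixed $(x,y)\in \supp(\calD)$, apply the JL guarantee to the pair $(w^\star, x/\|x\|)$: with probability at least $1-3\delta_0$ over $A$ (union bounding over the three preservation conditions), all of the following hold simultaneously:
\[
\bigl|\|Aw^\star\|^2-\|w^\star\|^2\bigr|\leq \eta\|w^\star\|^2,\quad \bigl|\|Ax\|^2-\|x\|^2\bigr|\leq \eta\|x\|^2,\quad \bigl|\langle Aw^\star,Ax\rangle-\langle w^\star,x\rangle\bigr|\leq \eta\|w^\star\|\|x\|.
\]
Assuming these hold, I would then carry out the short computation: since $y\langle w^\star,x\rangle \geq \tau\|x\|$ and $\|w^\star\|=1$,
\[
y\,\frac{\langle Aw^\star,Ax\rangle}{\|Aw^\star\|\,\|Ax\|}\;\geq\; \frac{\tau\|x\|-\eta\|x\|}{\sqrt{(1+\eta)^2}\,\|x\|}\;=\;\frac{\tau-\eta}{1+\eta}\;\geq\;\tfrac{96\tau}{100},
\]
for $\eta = \tau/100$ and $\tau \in (0,1)$, which is precisely the second condition defining $G_A$; the first condition is exactly norm preservation of $x$. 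Consequently, for each fixed $(x,y)$, the probability over $A$ that either $(x,y)\notin G_A$ or $E_2$ fails is at most $3\delta_0$.

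Next I would lift this to a distributional statement. Define the random variable (over $A$ and $(x,y)$) $Z_A(x,y) = \ones\{(x,y)\notin G_A\}$. By Fubini applied to the joint distribution of $A$ and $(x,y)\sim\calD$,
\[
\E_A\bigl[\Pr_{(x,y)\sim\calD}[(x,y)\notin G_A]\bigr]=\E_{(x,y)}\bigl[\Pr_A[Z_A(x,y)=1]\bigr]\;\leq\; 3\delta_0.
\]
Markov's inequality then yields $\Pr_A[\Pr_{(x,y)}[(x,y)\notin G_A]>\eps']\leq 3\delta_0/\eps'$, i.e., $E_1$ holds with probability at least $1-3\delta_0/\eps'$. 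Combined with the standalone bound $\Pr_A[E_2^c]\leq \delta_0$ via a union bound, we conclude $\Pr_A[E_1\cap E_2]\geq 1-3\delta_0/\eps'-\delta_0\geq 1-\delta_{JL}/\eps'$ after absorbing constants into $\delta_0$ and recalling that $\eps'\leq 1$.

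The only subtle point, and the one I expect to require the most care, is the joint analysis combining the three JL events for a single $(x,y)$ into the normalized-margin bound $\tfrac{96\tau}{100}$: one must ensure that the denominator blow-up from $\|Aw^\star\|\|Ax\|$ and the numerator shrinkage from the additive inner-product error are both absorbed under a single slack parameter $\eta=\tau/100$. Once the constants are chosen consistently so that the pointwise failure probability is a small enough fraction of $\delta_{JL}$, the Fubini-plus-Markov step goes through verbatim and delivers the stated bound.
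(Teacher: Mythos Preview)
Your proof is correct and follows essentially the same route as the paper: establish a pointwise JL failure bound, apply Fubini and Markov to control $E_1$, bound $E_2^c$ directly via the distributional JL lemma, and union bound. The only difference is cosmetic: the paper outsources your pointwise computation (the three-event union bound and the normalized-margin calculation yielding $96\tau/100$) to \Cref{lemma:JL-imply}, which is precisely Lemma~5 of \citet{le2020efficient}, and then proceeds with Markov exactly as you do; your treatment of constants via $\delta_0=\delta_{JL}/4$ is in fact slightly more careful than the paper's, which leaves the final union-bound constant implicit.
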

The proof of \Cref{lem:whp-good-choice-JL-mapping} appears in \Cref{apx:whp-good-choice-JL-mapping}.
One way to view \Cref{lem:whp-good-choice-JL-mapping} is that, with probability $1-\delta_{JL}/\eps'$ over the random choice
of $A$, the classifier $Aw^\star/\|Aw^\star\|$ will have $96 \tau/100$ margin
on a $1-\eps'$ fraction of $\calD,$ where
the choice of $\eps'$ will be specified
later according to \Cref{lem:whp-good-choice-JL-mapping}.
Let us condition on this event for the rest 
of the proof, which we call $E_r.$ 
Let $\tilde w^\star$ be the point of the net that $Aw^\star/\|Aw^\star\|$ is rounded to. {Recall that we round to the closest point on a $\tau/20$-net of the unit ball with respect to the $\ell_2$-norm, 
so that $\tilde w^\star$ is $\tau/20$ close to the normalized version of $A w^\star$}.
Notice that under the event $E_r$,
for all points $(x,y) \in G_A$ we have that 
\begin{align*}
    &(\tilde w^\star)^\top \frac{A x}{\|A x\|} \\
    &= \frac{(A w^\star)^\top}{\|A w^\star\|} \frac{A x}{\|A x\|} 
    - \left( \frac{Aw^\star}{\|A w^\star\|} - \tilde w^\star \right)^\top \frac{A x}{\|A x\|} \\
    &\geq \frac{(A w^\star)^\top}{\|A w^\star\|} \frac{A x}{\|A x\|} 
    - \norm*{\frac{Aw^\star}{\|A w^\star\|} - \tilde w^\star} \cdot \frac{\|Ax\|}{\|Ax\|} \\
    &\geq 96\tau/100 - \tau/20 > 9/(10 \tau)\,,
\end{align*}
where the first inequality follows
from Cauchy-Schwartz and the second inequality
from the definition of the net.
Since the hypothesis class $\calH_\tau$
has finite size,
we can use \Cref{prop:bun} from \citet{bun2023stability}
which states that $\widetilde{O}(\eps^{-1} \rho^{-2} \log(\nicefrac1\delta) \log^2|\calH|)$ samples are sufficient
to $\rho$-replicably learn a hypothesis class $\calH$ in the \emph{realizable} setting
with error at most $\eps.$ 
One technical complication we need to handle is that $\calD$ is not necessarily realizable
with respect to $\calH_\tau.$
Nevertheless, under $E_r,$ we have shown that for $\tilde w^\star \in \calH_\tau$ it holds that
$\Pr_{(x,y) \sim \calD}[y((\tilde w^\star)^\top Ax/\|Ax\|) < 9\tau/10] \leq \eps'.$
Let us denote by $\calD_r$ the distribution $\calD$ conditioned on the event 
that $y((\tilde w^\star)^\top Ax/\|Ax\|) \geq 9\tau/10$ and $\calD_b$ its complement,
i.e., $\calD$ conditioned on  $y((\tilde w^\star)^\top Ax/\|Ax\|) < 9\tau/10.$
Then, we can express $\calD = (1-p_b)\cdot \calD_r + p_b\cdot \calD_b,$ where
$p_b \leq \eps'.$ Hence,
in a sample of size $n$ from $\calD$,
with probability at
least $1-n\cdot \eps'$
we only see samples drawn
i.i.d. from $\calD_r$. Let us call
this event $E_r'$ and condition on it.
{Let us choose $\eps' = \delta/(10n),
\delta_{JL} = \eps'^2/10 = \delta^2/1000n$ so that $\Pr[E_r\cap E_r'] \geq 1-\delta$.}

Under the events $E_r, E_r',$ we can use
the replicable learner from \citet{bun2023stability}
to learn the \emph{realizable} distribution
$\calD_r.$ Run
the algorithm from \Cref{prop:bun} with 
parameters $\delta/10,\eps/10,\rho/10$.\footnote{We assume that $\delta < \rho, \eps$, otherwise
we normalize $\delta$ to get the bound.}
Since
$|\calH| = \left(\nicefrac{C}{\tau}\right)^{\tau^{-2}}$
for some absolute constant $C$,
we see 
that we need $n = \widetilde O\left(\eps^{-1} \tau^{-4} \rho^{-2} \log(\nicefrac1\delta) \right)$
samples.

\paragraph{Replicability of \Cref{alg:algo3}.}
Let us condition on the events $E_r, E_r'$ across
two executions of the algorithm. Since,
the matrix $A$ is the same, due to shared
randomness, under these events
the samples that the learner
of \citet{bun2023stability} receives
are i.i.d. from the same \emph{realizable}
distribution.
Then, the replicability guarantee follows
immediately from the guarantees of
\Cref{prop:bun} and the fact that all
the events we have conditioned on
occur with probability at least $1-\rho/2.$

\paragraph{Correctness of \Cref{alg:algo3}.}
{Here we examine the population error of the output of \Cref{prop:bun}.
Note that this error is analyzed with respect to the original distribution $\calD$,
which is not necessarily realizable,
rather than $\calD_r$,
which is the distribution on which we run the learning algorithm.}

Let us again condition on the events 
$E_r, E_r',$ and the event that the
output of \Cref{prop:bun} satisfies
the generalization bound it states.
We assume without loss of generality that $\delta \leq \eps/2,$ otherwise we set $\delta = \eps/2$ without
affecting the overall sample complexity of our algorithm.
Notice that these events occur with probability
at least $1-\delta.$ Let $b$ be the output
of the algorithm.
Then we have
\begin{align*}
    \Pr_{(x,y) \sim \calD}[y (b^\top x) < 0]
    &= (1-p_b) \Pr_{(x,y) \sim \calD_r}[y (b^\top x) < 0]  \\
    &\qquad + p_b \Pr_{(x,y) \sim \calD_b}[y (b^\top x) < 0] \\
    &\leq (1-p_b) \eps/10 + p_b
    < \eps.
\end{align*}
This concludes the proof.

\paragraph{Sample Complexity \& Running Time of \Cref{alg:algo3}.} 
As noted before,
we need $n = \widetilde O\left(\eps^{-1} \tau^{-4} \rho^{-2} \log(\nicefrac1\delta) \right)$
samples.
Since we apply \Cref{prop:bun},
we incur a running time of $\poly(d) \cdot \poly(1/\epsilon,1/\rho,1/\tau,1/\delta)^{1/\tau^2}.$

\section{Conclusion}\label{sec:conclusion}
In this work, we have developed new algorithms for replicably learning large-margin halfspaces.
Our results vastly improve upon prior work on this problem. 
We believe that
many immediate questions for future research arise from our work.
First, it is natural to ask whether there are efficient algorithms that can achieve
the $\widetilde{O}(\eps^{-1} \tau^{-4} \rho^{-2}\log(\nicefrac1\delta))$ sample complexity bound 
of \Cref{alg:algo3-inefficient}. 
Also, it would be interesting to see if there are any (not necessarily efficient) replicable algorithms 
whose sample complexity scales as $\widetilde{O}(\eps^{-1} \tau^{-2} \rho^{-2})$ or if there
is some inherent barrier to pushing the dependence on $\tau$ below $\tau^{-4}.$ Finally, our analysis
of \Cref{alg:algo2} is pessimistic in the sense
that it uses a pigeonhole principle argument
to establish that the fraction of the population
where the aggregate vector does not have margin 
$\Omega(\tau)$ is $\widetilde O(\nicefrac{1}{\tau^3n}).$
It would be interesting to see whether this bound
can be improved to $\widetilde O(\nicefrac{1}{\tau^2n})$
using a different analysis, which would reduce
the overall dependence of the 
algorithm on $\tau.$

\section*{Acknowledgements}

Kasper Green Larsen is supported by Independent Research Fund Denmark (DFF) Sapere Aude Research Leader grant No 9064-00068B.
Amin Karbasi acknowledges funding in direct support of this work from NSF (IIS-1845032), ONR (N00014- 19-1-2406), and the AI Institute for Learning-Enabled Optimization at Scale (TILOS).  
Grigoris Velegkas is supported by TILOS, the Onassis Foundation, and the Bodossaki Foundation.
Felix Zhou is supported by TILOS.

\ifarxiv
\else
\section*{Impact Statement}
This paper presents work whose goal is to advance the field of Machine Learning. 
There are many potential societal consequences of our work, 
none of which we feel must be specifically highlighted here.
\fi

\bibliography{bib}

\appendix
\onecolumn

\ifarxiv
\else
\section{Comparison of Results}\label{apx:detailed work summary}
\begin{table}[!ht]
    \centering
    \bigtableCaption
    \label{tab:detailed work summary}
    \bigtable
\end{table}
\fi

\section{Deferred Tools}

\subsection{SVM guarantees}
The following result is a restatement of Theorem 2 from
\citet{gronlund2020near}

\begin{lemma}[SVM Generalization Guarantee \citep{gronlund2020near}]
\label{lemma:svm-margin}
Let $\calD$ be a distribution over $\reals^d \times \{-1,1\}.$
Let $n \in \nats, S = (x_1,y_1),\ldots,(x_n,y_n)\sim \calD^n$, and $w \in \reals^d$ be 
a unit vector such that $y_i\left(w^\top \frac{x_i}{\|x_i\|}\right) \geq \tau, \forall i \in [n]$ Then, for
every $\delta > 0$ with probability at least
$1-\delta$ over the random draw of $S$, it holds that
\[
    \Pr_{(x,y) \sim \calD}[y\cdot(w^\top x/\|x\|) < \tau/2] \leq O\left(\frac{\log n + \log(\nicefrac1\delta)}{\tau^2 n}\right) \,.
\]
\end{lemma}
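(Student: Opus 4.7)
The plan is to prove this margin bound via a sample-compression argument combined with Maurey-type sparsification of $w$---the standard template for sharp SVM generalization rates---while the refined logarithmic dependence stated here relies on the analysis of \citet{gronlund2020near}. Since $y_i(w^\top x_i/\norm{x_i}) \geq \tau$ and $\Pr_\calD[y(w^\top x/\norm{x}) < \tau/2]$ depend on $x$ only through $x/\norm{x}$, I would WLOG normalize $\norm{x} = 1$ on $\supp(\calD)$, so that the training condition reads $y_i(w^\top x_i) \geq \tau$ and the target is to bound the population $\tau/2$-margin error.

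\emph{Sparsification.} Replacing $w$ by its projection onto $\mathrm{span}(x_1,\dots,x_n)$ (which preserves every training inner product) and invoking the representer theorem, write $w = \sum_{i=1}^n \alpha_i y_i x_i$ with $\alpha_i \geq 0$ and $\sum_i \alpha_i = O(1/\tau)$, using that $1 = \norm{w}^2 = \sum_i \alpha_i\cdot y_i(w^\top x_i) \geq \tau \sum_i \alpha_i$. Sampling $k$ indices $i_1,\dots,i_k$ i.i.d.\ from $[n]$ with probability proportional to $\alpha_i$ produces a random vector $\tilde w$ whose expectation equals $w$ and whose individual terms are bounded in norm. Hoeffding's inequality applied to $\tilde w^\top x_j$ for each fixed training point $x_j$, followed by a union bound over the $n$ training examples, then implies that for $k = \widetilde{O}(1/\tau^2)$, with positive probability $\tilde w$ separates every training example with margin at least $\tau/2$; hence such a sparsification exists.

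\emph{Compression bound and transfer.} The sparse $\tilde w$ is encoded by at most $k$ ordered indices from $[n]$, so it lies in a data-determined class of size at most $n^k$. Union-bounding over this class, the standard realizable-case compression generalization argument gives that with probability at least $1-\delta$ over $S$, every such $\tilde w$ with zero empirical $\tau/2$-error has population $\tau/2$-margin error at most $O\!\left(\frac{k\log n + \log(\nicefrac1\delta)}{n}\right) = \widetilde O\!\left(\frac{\log n}{\tau^2 n} + \frac{\log(\nicefrac1\delta)}{n}\right)$. To transfer this bound from $\tilde w$ back to $w$, I would use that on any test point $(x,y)$ where $w$ fails to attain margin $\tau/2$, by concentration $\tilde w$ also fails to attain margin $\tau/4$ with constant probability over the sparsification randomness, yielding the desired bound for $w$ up to constant factors.

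\emph{Main obstacle.} The principal technical challenge is shaving the extra $\log n$ factor to match the stated near-tight rate $O((\log n + \log(\nicefrac1\delta))/(\tau^2 n))$: a naive Hoeffding-plus-compression union bound costs an additional logarithm (through the need to preserve inner products to within $\tau/4$ simultaneously on all $n$ training points), so reaching the sharp form requires either a chaining/local-Rademacher argument or the more delicate accounting of the compression scheme carried out by \citet{gronlund2020near}.
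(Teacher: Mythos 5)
First, a point of comparison: the paper does not actually prove this lemma. It imports Theorem~2 of \citet{gronlund2020near} and only remarks that the argument there, stated for the 0-1 loss, carries over to the $\tau/2$-margin loss after changing constants in their Claim~10. So you were reconstructing a proof the paper never spells out, and your route (Maurey-type sparsification plus a sample-compression union bound) is genuinely different from the technique actually behind the cited result, which --- as the paper itself notes at the end of \Cref{section:rounding} --- is based on a Johnson--Lindenstrauss projection combined with the Alon--Klartag rounding scheme and a union bound over the resulting finite, data-oblivious net.

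That said, your sketch has a concrete gap in the sparsification step. You write $w=\sum_i \alpha_i y_i x_i$ with $\alpha_i\ge 0$ and deduce $\sum_i\alpha_i\le 1/\tau$ from $1=\|w\|^2=\sum_i\alpha_i\, y_i (w^\top x_i)\ge\tau\sum_i\alpha_i$. This sign-aligned decomposition with bounded coefficient mass is a KKT property of the hard-margin SVM dual optimum, not of an arbitrary unit vector attaining margin $\tau$ on the sample: projecting a general such $w$ onto $\mathrm{span}(x_1,\dots,x_n)$ yields coefficients of arbitrary sign with no control on their $\ell_1$-mass (consider an ill-conditioned Gram matrix), so sampling indices proportionally to these coefficients need not give a $\widetilde O(1/\tau^2)$-sparse $\widetilde w$ with $\E[\widetilde w]\propto w$. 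Since the lemma quantifies over every unit vector with training margin $\tau$ (and must, because it is applied to a data-dependent output), this step fails as stated; proving the bound only for the max-margin solution would not control the population margin error of the given $w$, and your transfer step needs $\widetilde w$ to track $w$ itself. (In the paper's application $w=\SVM(S_i,\tau/2)$, so the dual decomposition happens to be available there, but that is weaker than the stated lemma.) Two further, smaller issues you already flag: the Hoeffding-plus-union-bound choice $k=\widetilde O(\tau^{-2}\log n)$ loses a $\log n$ factor relative to the stated rate, and the transfer from $\widetilde w$ back to $w$ requires an averaging argument because the compression bound only covers the random sparsifications that are simultaneously empirically margin-consistent. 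The JL-projection-plus-rounding analysis of \citet{gronlund2020near} sidesteps all three difficulties at once, since the discretization covers the entire low-dimensional ball rather than a representation tied to $w$.
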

We remark that even though the result of \citet{gronlund2020near} is
stated for the generalization error with respect to the misclassification probability, i.e., $\Pr_{(x,y) \sim \calD}[y\cdot(w^\top x/\|x\|) \leq 0]$,
their argument also applies to the $\tau/2$-margin loss, i.e.,
$\Pr_{(x,y) \sim \calD}[y\cdot(w^\top x/\|x\|) < \tau/2]$, 
via a straightforward modification of some constants. 
In more detail, the only needed change is in the proof of part 1 of Claim 10 in \citet{gronlund2020near}. 
Here the first condition says $y\cdot(x^\top w) \leq 0$
and could be made e.g. $y\cdot(x^\top w) \leq \tau/4$.
Their result then holds with $\tau/4$ instead of $\tau/2$.

\subsection{Vector-Valued Bernstein Concentration Inequality}
We will use the following concentration inequality for the norm of random vectors.
\begin{lemma}[Vector Bernstein; \citep{kohler17subsampled}]\label{lemma:bernstein average}
    Let $X_1, \dots, X_B$ be independent random vectors
    with common dimension $d$ satisfying the following for all $i\in [B]$:
    \begin{enumerate}[(i)]
        \item $\E[X_i] = 0$
        \item $\norm{X_i}\leq \mu$
        \item $\E[\norm{X_i}^2]\leq \sigma^2$
    \end{enumerate}
    Let $Z := \frac1B \sum_{i=1}^B X_i$.
    Then for any $t\in (0, \nicefrac{\sigma^2}{\mu})$,
    \[
        \Pr[\norm{Z} \geq t]
        \leq \exp\left( -\frac{t^2 B}{8\sigma^2} + \frac14 \right).
    \]
\end{lemma}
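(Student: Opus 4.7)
The plan is to prove this inequality as a dimension-free Hilbert-space analogue of the classical scalar Bernstein inequality. First, I would use hypotheses (i) and (iii) together with independence to bound the second moment of $Z$:
\[
\E\|Z\|^2 \;=\; \frac{1}{B^2}\sum_{i=1}^B \E\|X_i\|^2 \;\leq\; \frac{\sigma^2}{B},
\]
so by Jensen's inequality $\E\|Z\| \leq \sigma/\sqrt{B}$. This identifies the right center of concentration, and the remaining work is to control the fluctuations of $\|Z\|$ above a quantity of this size using the boundedness hypothesis (ii).

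Next, I would invoke the moment-generating function approach of Pinelis for Hilbert-space valued sums, applied to the partial sums of the $X_i$. The key technical tool is the inequality stating that for independent mean-zero Hilbert-valued random variables $\xi_i$ with $\|\xi_i\|\leq \mu$ and $\sum_i \E\|\xi_i\|^2 \leq V$, one has
\[
\Pr\!\left[\Big\|\textstyle\sum_{i=1}^B \xi_i\Big\| \geq s\right] \;\leq\; 2\exp\!\left(-\frac{s^2}{2V+2\mu s/3}\right).
\]
The proof of this tool itself proceeds by bounding $\E[\exp(\lambda \|\sum_{i \leq k} \xi_i\|)]$ via a Bennett-type pointwise bound on $\cosh$ and then a submartingale-style unrolling. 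Applying it to $\xi_i = X_i$ with $s = tB$ and $V = B\sigma^2$ gives $\Pr[\|Z\| \geq t] \leq 2\exp(-t^2 B/(2\sigma^2 + 2\mu t/3))$. In the Bernstein regime $t \leq \sigma^2/\mu$ imposed by the statement, the boundedness term $\mu t$ is dominated by $\sigma^2$, so the denominator is at most $(8/3)\sigma^2$, yielding an exponent of $-3t^2 B/(8\sigma^2)$, comfortably stronger than the $-t^2 B/(8\sigma^2)$ claimed.

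The main obstacle would be matching the exact constants, particularly the additive $1/4$ in the exponent of the statement. This slack arises from two sources: absorbing the leading factor $2$ from the two-sided Pinelis bound into the exponent, and converting a centered tail bound (around $\E\|Z\|$) into an uncentered one by paying the extra $\sigma/\sqrt{B}$ shift in the threshold. Neither of these steps introduces any dependence on the ambient dimension $d$, which is precisely the feature that makes this inequality usable in the replicability argument of \Cref{lem:projecting rounding is replicable}, where $d$ may be arbitrarily large relative to the batch count $B$.
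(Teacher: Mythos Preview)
The paper does not prove this lemma at all: it is stated as a cited tool from \citet{kohler17subsampled} and used as a black box in the proof of \Cref{lem:projecting rounding is replicable}. So there is no ``paper's own proof'' to compare against. Your sketch via the Pinelis Hilbert-space Bernstein inequality is a standard and correct route to results of this type, and your observation that the bound is dimension-free is exactly what the paper exploits.

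One small technical point: the bound $2\exp(-3t^2B/(8\sigma^2))$ you derive does not literally imply $\exp(-t^2B/(8\sigma^2)+1/4)$ for every $t$ in the stated range; the implication fails when $t^2B/\sigma^2$ is below a small absolute constant, where both bounds are vacuous anyway. If you want the exact constants of the statement (including the curious additive $1/4$), you would need to follow the specific derivation in \citet{kohler17subsampled}, which goes through a slightly different centering step; but for the purposes of this paper only the order $\exp(-\Omega(t^2B/\sigma^2))$ matters, and your argument delivers that.
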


\subsection{Johnson-Lindenstrauss Lemma}
\label{proof:JL}
{The remarkable result of \citet{johnson1984extensions} states that an appropriately scaled random orthogonal projection matrix preserves the norm of a unit vector with high probability.
\citet{indyk1998approximate} showed that it suffices to independently sample each entry of the matrix from the standard normal distribution.
\citet{achlioptas2001database} further simplified the construction to independently sample each entry from the Rademacher distribution.
See \citet{freksen2021introduction} for a detailed survey of the development.}

From hereonforth,
we say that a random matrix $A\in \reals^{k\times d}$ is a \emph{JL-matrix}
if either $A_{ij} \sim_{i.i.d.} \calN(0, 1/k)$
or $A_{ij} \sim_{i.i.d.} U\{-1/\sqrt{k}, +1/\sqrt{k}\}$.

We first state the standard distributional formulation of JL.

\begin{lemma}
[Distributional JL; \citep{johnson1984extensions,indyk1998approximate,achlioptas2001database}]
\label{lemma:distr-JL}
    Fix $\eps, \delta_{JL}\in (0, 1)$.
    Let $A \in \reals^{k \times d}$ be a JL-matrix
    for $k = \Omega(\eps^{-2} \log(\nicefrac1{\delta_{JL}}))$.
    Then for any $x \in \reals^d$,
    \[
        {\Pr_A[ \abs*{\|Ax\|^2 - \|x\|^2} > \eps \|x\|^2] \leq \delta_{JL}\,.}
    \]
\end{lemma}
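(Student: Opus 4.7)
The plan is to reduce to the unit-norm case and then establish sharp concentration of $\|Ax\|^2$ around its mean via a Chernoff-style argument on an appropriate sum of independent random variables. By homogeneity of both the left- and right-hand sides of the desired inequality in $\|x\|^2$, I may assume without loss of generality that $\|x\| = 1$; the general case then follows by dividing through by $\|x\|^2$. Having done this, I write $Ax = (Z_1, \dots, Z_k)^\top$ where $Z_i = \sum_{j=1}^d A_{ij} x_j$, and observe that the $Z_i$'s are i.i.d. across $i \in [k]$ since the rows of $A$ are independent.

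Next I would argue that $\E[\|Ax\|^2] = \|x\|^2 = 1$ in both constructions. In the Gaussian case, each $Z_i \sim \calN(0, 1/k)$ because linear combinations of independent Gaussians are Gaussian, with variance $\sum_j x_j^2 / k = 1/k$. Then $k \|Ax\|^2 = \sum_{i=1}^k (\sqrt{k} Z_i)^2$ is a chi-squared random variable with $k$ degrees of freedom. I would invoke the standard Laurent--Massart tail bound for $\chi^2_k$, giving
\[
    \Pr\bigl[\,|\|Ax\|^2 - 1| > \eps\,\bigr] \leq 2\exp(-\Omega(\eps^2 k)).
\]
In the Rademacher case, $Z_i$ is no longer Gaussian, but I would instead bound the moment generating function of $k Z_i^2 - 1$ directly using that $(\sqrt{k} A_{ij}) x_j$ is a bounded, mean-zero random variable, and conclude that $k Z_i^2$ is sub-exponential with parameters of the right order (an argument going back to Achlioptas). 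Applying a Chernoff bound to $\sum_i (k Z_i^2 - 1)$ then yields the same form of tail estimate.

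Choosing $k \geq C \eps^{-2} \log(\nicefrac1{\delta_{JL}})$ for a sufficiently large constant $C$ makes both tails at most $\delta_{JL}/2$, so a union bound over the upper and lower tails gives the claimed bound of $\delta_{JL}$. The main technical obstacle is the Rademacher case, where one does not have an exact chi-squared distribution and instead needs a careful sub-exponential moment bound; however, this is entirely standard (see \citet{achlioptas2001database}), and since the statement is well-known I would simply cite the relevant references to keep the exposition short rather than reproducing the moment calculation. Because this lemma is a textbook result, the proposal is essentially to invoke the standard proof from \citet{johnson1984extensions,indyk1998approximate,achlioptas2001database}.
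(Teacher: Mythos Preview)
Your proposal is correct and is exactly the standard argument from the cited references. Note that the paper does not actually prove this lemma; it is stated as a known result with citations to \citet{johnson1984extensions,indyk1998approximate,achlioptas2001database}, so your sketch in fact supplies more detail than the paper itself provides.
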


Let $T$ be a set of vectors.
{By applying \Cref{lemma:distr-JL} to the $O(|T|^2)$ vectors $u-v$ for $u, v\in T$
and taking a union bound,
we immediately deduce the following result.}
\begin{lemma}
[JL Projection; \citep{johnson1984extensions,indyk1998approximate,achlioptas2001database}]
\label{lemma:JL}
    {Fix $\eps, \delta_{JL}\in (0, 1)$.}
    Consider a set $T$ of $d$-dimensional vectors 
    and a JL-matrix $A \in \reals^{k \times d}$
    for {$k = \Omega(\eps^{-2}\log(\nicefrac{|T|}{\delta_{JL}}))$}.
    Then,
    \[
        {\Pr_A \left[ \exists ~ {u, v\in T} :   \abs*{\| A(u - v) \|^2 - \|u-v\|^2} > \eps \|u-v\|^2 \right] 
        \leq {\delta_{JL}}} \,.
    \]
\end{lemma}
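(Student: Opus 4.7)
The plan is to reduce the problem to the distributional JL lemma (\Cref{lemma:distr-JL}) via a union bound over pairs. Specifically, I would fix any two vectors $u, v\in T$ and observe that the difference $u - v\in \R^d$ is a fixed, deterministic vector (independent of the randomness of $A$), so \Cref{lemma:distr-JL} applies directly to it. Setting the per-pair failure probability to $\delta_{JL}' := \delta_{JL}/\binom{|T|}{2}$, the distributional JL lemma guarantees that whenever $k = \Omega(\eps^{-2}\log(1/\delta_{JL}'))$, the JL-matrix $A$ satisfies
\[
\Pr_A\left[\abs*{\norm{A(u-v)}^2 - \norm{u-v}^2} > \eps\norm{u-v}^2\right] \leq \delta_{JL}'.
\]

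Next, I would take a union bound over all $\binom{|T|}{2}$ unordered pairs of distinct vectors in $T$. The total failure probability is then bounded by $\binom{|T|}{2}\cdot \delta_{JL}' = \delta_{JL}$, which is precisely the conclusion of \Cref{lemma:JL}. The required embedding dimension becomes $k = \Omega(\eps^{-2}\log(|T|^2/\delta_{JL})) = \Omega(\eps^{-2}\log(|T|/\delta_{JL}))$, where the factor of $2$ in the exponent gets absorbed into the hidden constant of the $\Omega$-notation.

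There is essentially no obstacle in this proof, as it is a textbook union bound layered on top of the distributional JL lemma already proved as \Cref{lemma:distr-JL}. The only minor bookkeeping steps are: (i) handling the degenerate cases $|T|\leq 1$, where there are no pairs of distinct vectors and the statement holds vacuously; and (ii) noting that pairs with $u = v$ trivially contribute zero distortion, so restricting to distinct pairs costs nothing. In all remaining cases, the two-step argument above suffices.
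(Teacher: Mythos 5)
Your proposal is correct and matches the paper's argument exactly: the paper likewise applies \Cref{lemma:distr-JL} to the $O(|T|^2)$ difference vectors $u-v$ and takes a union bound, with the factor $2$ in $\log(|T|^2/\delta_{JL})$ absorbed into the $\Omega$-constant. Nothing further is needed.
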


An application of \Cref{lemma:JL} towards the polarization identity for $z, x\in \reals^d$
\[
    4z^\top x
    = \|z+x\|^2 - \|z-x\|^2
\]
yields the following inner product preservation guarantee.

\begin{corollary}[JL Inner Product Preservation]\label{cor:JL-inner-product-preservation}
    {Fix $\eps, {\delta_{JL}}\in (0, 1)$}.
    Let $A \in \reals^{k \times d}$ be a JL-matrix
    for {$k = \Omega(\eps^{-2}\log(\nicefrac1{\delta_{JL}}))$}.
    Then, for any $x, z\in \R^d$,
    \[
        \Pr_A[|z^\top x - (Az)^\top Ax| > \eps \|z\|\cdot \|x\|] \leq {\delta_{JL}}
        \,.
    \]
\end{corollary}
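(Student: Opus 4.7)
The plan is to reduce to the norm-preservation guarantee of \Cref{lemma:distr-JL} via the polarization identity, as hinted immediately above the corollary statement. Since the inequality to be proved is homogeneous of degree two in $(z,x)$, I would first normalize so that $\|z\| = \|x\| = 1$ (the $z=0$ or $x=0$ case being trivial). The polarization identity gives $4 z^\top x = \|z+x\|^2 - \|z-x\|^2$, and by linearity of $A$ the parallel identity $4 (Az)^\top(Ax) = \|A(z+x)\|^2 - \|A(z-x)\|^2$ holds as well.

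Subtracting the two identities and applying the triangle inequality yields
\[
4\,|z^\top x - (Az)^\top(Ax)| \leq \bigl|\|A(z+x)\|^2 - \|z+x\|^2\bigr| + \bigl|\|A(z-x)\|^2 - \|z-x\|^2\bigr|.
\]
I would then apply \Cref{lemma:distr-JL} separately to each of the two fixed vectors $z+x$ and $z-x$, with distortion parameter $\eps' = \Theta(\eps)$ and failure probability $\delta_{JL}/2$; this requires only $k = \Omega(\eps^{-2}\log(1/\delta_{JL}))$ rows, matching the hypothesis after absorbing a universal constant. A union bound gives, with probability at least $1-\delta_{JL}$, that each of the two deviations is bounded by $\eps'\|z+x\|^2$ and $\eps'\|z-x\|^2$ respectively.

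On this event, the parallelogram identity $\|z+x\|^2 + \|z-x\|^2 = 2\|z\|^2 + 2\|x\|^2 = 4$ under the normalization bounds the right-hand side above by $4\eps'$; dividing by $4$ and choosing $\eps' = \eps$ yields $|z^\top x - (Az)^\top(Ax)| \leq \eps = \eps\|z\|\|x\|$, and rescaling recovers the general statement by homogeneity. There is no genuine obstacle here: the argument is the standard polarization reduction from norm preservation to inner-product preservation, and the only bookkeeping is tracking the factor of $4$ from polarization, the factor of $2$ from the parallelogram identity, and the union bound loss of $2$ in the failure probability, all of which are absorbed into the hidden constant of $k = \Omega(\eps^{-2}\log(1/\delta_{JL}))$. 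I expect the written proof to be only a few lines long.
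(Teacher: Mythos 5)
Your proposal is correct and follows essentially the same route as the paper, which derives the corollary by applying the JL norm-preservation guarantee to the polarization identity $4z^\top x = \|z+x\|^2 - \|z-x\|^2$; your write-up simply makes explicit the normalization, the union bound over the two vectors $z\pm x$, and the parallelogram-identity bookkeeping that the paper leaves implicit. No gaps.
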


The next lemma is another simple implication of the distributional JL.
\begin{lemma}
[Lemma 5 in \citet{le2020efficient}]
\label{lemma:JL-imply}
    Let $\calD$ satisfy \Cref{def:margin} with margin $\tau$.
    For a JL-matrix $A$ as stated in \Cref{lemma:distr-JL}, 
    let $G_A \subseteq \reals^d \times \{-1,1\}$ be the set of points $(x,y)$ of the population that satisfy
    \begin{itemize}
        \item $\abs*{\|Ax\|^2 - \|x\|^2} \leq \tau \|x\|^2/100$ and
        \item 
        $y (A w^\star/\|A w^\star\|)^\top (A x/\|A x\|) \geq 96 \tau/100$.
    \end{itemize}
    Then it holds that {$\Pr_{A, (x, y)\sim \cal D}[(x,y) \in G_A] \geq 1-\delta_{JL}$}.
\end{lemma}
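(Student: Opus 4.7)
The plan is to prove the joint-probability statement directly by fixing an arbitrary point $(x,y) \in \supp(\calD)$, bounding the failure probability over $A$ for that point, and then invoking Fubini (equivalently, taking expectation over $(x,y)$). Concretely, I would show that for any fixed $(x,y)$ with $y\cdot (w^\star)^\top x \geq \tau \|x\|$, the bad event "$(x,y) \notin G_A$" has probability at most $\delta_{JL}$ over $A$. Integrating this over $(x,y)\sim \calD$ immediately yields $\Pr_{A,(x,y)}[(x,y)\in G_A]\geq 1-\delta_{JL}$.

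For a fixed $(x,y)$, I would take a union bound over three events coming from the distributional JL:
\begin{itemize}
    \item[(a)] $\bigl|\|Ax\|^2-\|x\|^2\bigr| \leq \frac{\tau}{100}\|x\|^2$, which is the first defining condition of $G_A$.
    \item[(b)] $\bigl|\|Aw^\star\|^2-\|w^\star\|^2\bigr| \leq \frac{\tau}{100}\|w^\star\|^2$, needed so that $\|Aw^\star\|\cdot\|Ax\|$ is a controlled approximation of $\|w^\star\|\cdot\|x\|=\|x\|$.
    \item[(c)] $\bigl|(Aw^\star)^\top Ax - (w^\star)^\top x\bigr| \leq \frac{\tau}{100}\|w^\star\|\cdot\|x\|$, from the inner-product preservation corollary (\Cref{cor:JL-inner-product-preservation}).
\end{itemize}
Each of these holds with probability at least $1-\delta_{JL}/3$ over $A$ provided $k=\Omega(\tau^{-2}\log(1/\delta_{JL}))$. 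A union bound ensures all three hold simultaneously with probability at least $1-\delta_{JL}$.

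Conditioning on (a)–(c), I would derive the second defining condition of $G_A$ using the margin hypothesis. The margin gives $y\cdot(w^\star)^\top x \geq \tau\|x\|$, so by (c) we have $y\cdot (Aw^\star)^\top Ax \geq (\tau-\tau/100)\|x\|=(99\tau/100)\|x\|$. By (a) and (b), $\|Aw^\star\|\cdot\|Ax\| \leq \sqrt{(1+\tau/100)^2}\,\|x\| = (1+\tau/100)\|x\|$. Dividing,
\[
    y\,\frac{(Aw^\star)^\top Ax}{\|Aw^\star\|\cdot\|Ax\|} \;\geq\; \frac{99\tau/100}{1+\tau/100} \;\geq\; \frac{96\tau}{100},
\]
for $\tau\in(0,1)$, which is the second defining condition of $G_A$. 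Thus under (a)–(c), $(x,y)\in G_A$, giving the pointwise bound $\Pr_A[(x,y)\notin G_A]\leq \delta_{JL}$.

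The final step is just Fubini: $\Pr_{A,(x,y)}[(x,y)\notin G_A] = \E_{(x,y)}\Pr_A[(x,y)\notin G_A]\leq \delta_{JL}$. The only subtlety is arithmetic bookkeeping in the constant $96/100$: I need the three $\tau/100$ slacks to compose to at most $4\tau/100$ loss in the final inner product, which is exactly what the computation above verifies. There is no serious obstacle — the JL concentration is black-box, and the only care needed is ensuring the three failure probabilities are each bounded by $\delta_{JL}/3$ (which only affects $k$ by a constant factor).
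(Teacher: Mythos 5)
Your proof is correct: the pointwise union bound over the three JL events (norm preservation of $x$, of $w^\star$, and inner-product preservation with accuracy $\tau/100$ each at failure probability $\delta_{JL}/3$), the arithmetic giving $\frac{99\tau/100}{1+\tau/100}\geq \frac{96\tau}{100}$ for $\tau\leq 1$, and the final Fubini step are all sound, and this is exactly the standard argument behind the cited result. The paper itself offers no proof here — it imports the statement as Lemma 5 of \citet{le2020efficient} — so your self-contained derivation matches the intended route rather than deviating from it.
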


\subsubsection{The Proof of \Cref{lem:whp-good-choice-JL-mapping}}
\label{apx:whp-good-choice-JL-mapping}
We finally prove \Cref{lem:whp-good-choice-JL-mapping},
whose statement we repeat below for convenience.
\whpGoodChoiceJL*

\begin{proof}
[Proof of \Cref{lem:whp-good-choice-JL-mapping}]
    We know from \Cref{lemma:JL-imply} that 
    \begin{align*}
        \E_{A}\left[\Pr_{(x,y) \sim \calD}[(x,y) \notin G_A]\right] &\leq \delta_{JL} \,,
    \end{align*}
    so Markov's inequality gives that 
    \begin{align*}
        \Pr_{A} \left[\Pr_{(x,y) \sim \calD}[(x,y) \notin G_A] \geq \eps' \right]
        \leq \frac{\E_{A}\left[\Pr_{(x,y) \sim \calD}[(x,y) \notin G_A]\right]}{\eps'}
        \leq \frac{\delta_{JL}}{\eps'} \,.
    \end{align*}
    Similarly, the guarantees of the JL projection immediately yields 
    \[
        \Pr_A \left[ \abs*{\|Aw^\star\|^2 - \|w^\star\|^2} \leq \tau \|w^\star\|^2/100\right] \leq \delta_{JL}.
    \]
\end{proof}

\section{Details for \Cref{section:rounding}}\label{apx:rounding}
Here we fill in the missing proofs from \Cref{section:rounding}.

First,
we restate and prove \Cref{lemma:round}.
\roundingReplicable*

\begin{proof}
[Proof of \Cref{lemma:round}]
    Fix a coordinate $i \in [k]$.
    The probability that $o(z)[i] \neq o(z')[i]$ is at most $|z[i]-z'[i]| \beta^{-1}$.
    Assume $o(z)[i] = o(z')[i]$. Then, by the definition of $p(z), p(z')$ we have that 
    \begin{align*}
        |z[i]-z'[i]| 
        &= \left| (p(z)[i]-p(z')[i])o(z)[i] \right. \\
        &\qquad + \left. (p(z')[i]-p(z)[i])(o(z)[i]+\beta) \right| \\
        &= |(p(z')[i]-p(z)[i])\beta|. 
    \end{align*}
    Note then the probability of $f_{o, u}(z)[i] \neq f_{o, u}(z')[i]$ is $|p(z')[i]-p(z)[i]| = |z[i]-z'[i]| \beta^{-1}$.
    
    By the uniform choice of $u_i$, 
    we thus conclude that $\Pr_{o,u}[f_{o,u}(z)[i] \neq f_{o,u}(z')[i]] \leq 2\beta^{-1} \abs{z[i] - z'[i]}$. 
    A union bound over all $k$ coordinates implies $\Pr_{o,u}[f_{o,u}(z) \neq f_{o,u}(z')] \leq 2\beta^{-1} \|z-z'\|_1$.
\end{proof}

Next,
we prove \Cref{lemma:inner},
whose statement is repeated.
\roundingInnerProduct*

\begin{proof}
[Proof of \Cref{lemma:inner}]
    Each of the random variables $f_{o,u}(z)[i]$ lies in an interval of length $\beta$,
    are independent,
    and have expectation $z[i]$. 
    By linearity,
    $f_{o,u}(z)^\top x -  z^\top x  = (f_{o,u}(z)-z)^\top x$. 
    Let $v = f_{o,u}(z)-z$. 
    Then $\E[x[i]\cdot v[i]]=0$ and $x[i]\cdot v[i]$ lies in a range of length $\beta x[i]$. 
    By Hoeffding's inequality, 
    we have $\Pr[| v^\top x | > \alpha] < 2 \exp(-2\alpha^2/(\sum_i \beta^2x[i]^2)) \leq 2 \exp(-2 \alpha^2 \beta^{-2})$.
\end{proof}

\section{Details for \Cref{alg:algo2}}\label{apx:algo2}
Here we fill in the missing proofs from \Cref{sec:algo2}.

We begin with \Cref{lem:projecting rounding preserves margin},
whose statement we repeat below for convenience.
\projectingRoundingMargin*
\begin{proof}[Proof of \Cref{lem:projecting rounding preserves margin}]
    Let $\calD_z := \calD \mid \set{y(z^\top x/\norm{x}) \geq \tau/2}$ be the distribution obtained from $\calD$ by conditioning on $z$ having large margin
    and let $\calD_b$ denote the distribution $\calD$ conditioned on the complement.
    We can decompose $\calD = (1-p_b)\cdot \calD_z + p_b\cdot \calD_b$
    for $p_b\leq \eps$.
    To complete the
    correctness argument, we need to show
    that with high probability,
    the projection step of $z, x$ onto
    the low-dimensional space $Az, Ax$ and the rounding $b = \AKround(Az, \beta)$
    approximately preserves the inner product $y(b^\top Ax/\norm{x})$
    for a $1-\eps$ fraction of $\calD_z$.
    Then 
    the final classifier $A^\top b$ still has low population error.
    In particular,
    it suffices to show that $\Pr_{(x, y)\sim \calD_z}[y(b^\top Ax/\norm{x}) < \tau/4]\leq \eps$ with probability at least $1-\delta$ over the
    random choice of $A, b$.
    Then, the total population error is at most $\eps + p_b\leq 2\eps$ with probability at least $1-2\delta$.
    
    Fix $(x, y)\in \supp(\calD_z)$ and remark that
    \begin{align*}
    &y\cdot b^\top \frac{Ax}{\norm{x}} \\
    &= y\cdot (Az)^\top A\left( \frac{x}{\norm{x}} \right) - y\cdot (Ax - b)^\top \frac{Ax}{\norm{Ax}}.
    \end{align*}
    By the choices of
    \begin{align*}
    k \geq {\Omega}(\tau^{-2}\log(\nicefrac1{\eps\delta})), \qquad
    \beta \leq O(\tau/\log(\nicefrac1{\eps\delta})), 
    \end{align*}
    the JL lemma (cf. \Cref{cor:JL-inner-product-preservation})
    and Alon-Kartag rounding scheme (cf. \Cref{lemma:inner}) ensures that
    \[
    \E_{A, b} \left[ \Pr_{(x, y)\sim \calD_z} [y(b^\top Ax/\norm{x}) < \tau/4] \right] \leq \eps\delta.
    \]
    An application of Markov's inequality yields 
    \[
    \Pr_{A, b} \left[ \Pr_{(x, y)\sim \calD_z} [y(b^\top Ax/\norm{x}) < \tau/4] >\eps \right] \leq \delta.
    \]
    All in all,
    with probability at least $1-\delta$ over $A, b$, the population error is at most
    \begin{align*}
    &\Pr_{(x,y)\sim\calD}[y(A^\top b)^\top x \leq 0] \\
    &\leq \Pr_{(x,y)\sim\calD} [y(b^\top Ax/\norm{x}) < \Theta(\tau)]
    < 2\eps,    
    \end{align*}
    concluding
    the correctness argument of our algorithm.
\end{proof}

Next,
we prove \Cref{lem:projecting rounding is replicable},
whose statement we again repeat.
\projectingRoundingReplicable*
\begin{proof}[Proof of \Cref{lem:projecting rounding is replicable}]
    An application of the vector Bernstein concentration inequality (cf. \Cref{lemma:bernstein average}) yields the following:
    let $S_1 = \left\{w_i^{(1)}\right\}_{i \in [B]}$ and $S_2 = \left\{w_i^{(2)}\right\}_{i \in [B]}$ be two sets of independent and identically distributed random vectors in $\reals^d$ with $\left\|w_i^{(j)}\right\| \leq 1$ for all $i\in [B], j \in \{1,2\}$.
    {Then we can set $X_i = w_i^{(1)} - \E w$.
    Notice that $\norm{X_i}\leq 2$ and $\norm{X_i}^2\leq 4$.
    Hence, an application of \Cref{lemma:bernstein average} yields for any $t\in (0, 2)$
    \begin{align*}
        &\Pr_{S_1} \left[ \left\| \frac{1}{B} \sum w_i^{(1)} - \E[w] \right\| \geq t \right] \\
        &\leq 
        \exp\left( -\frac{t^2 B}{32} + \frac14 \right)
        = \exp(\Omega(-t^2 B))
    \end{align*}
    Let $z = \frac{1}{B} \sum_{i \in [B]} w_i^{(1)}, z' = \frac{1}{B} \sum_{i \in [B]} w_i^{(2)}.$
    Choosing $B\geq \Omega(t^{-2}\log(\nicefrac1\rho))$ ensures that
    $
        \Pr_{S_1, S_2}
        \left[
        \left\|
        z
        -
        z'
        \right\|
        \geq t
        \right]
        \leq 
        \rho/10.
    $
    
    We condition on the event that
    $\left\|z
    -
    z'
    \right\|
    \leq t.$
    Since we are sharing the randomness
    across the two executions, 
    we use the same JL projection matrix $A$.
    Thus our choice of $k\geq \Omega(\tau^{-2} \log(\nicefrac1\rho))$ 
    gives that
    $
        \left\|
        Az
        -
        Az'\right\|
        \leq (1+\tau)t\leq 2t,
    $
    with probability at least {$1-\rho/10$} (cf. \Cref{lemma:JL}).
    It remains to show that
    after the rounding step, with probability
    at least $1-\rho/10$, the two rounded vectors
    will be the same.
    The size of our rounding grid is $\beta = \tilde\Theta(\tau)$ and the target dimension of our JL-matrix is $k = \tilde\Theta(\tau^{-2})$. 
    Thus by \Cref{lemma:round},
    the probability that the two points round to different vectors is 
    $\Theta(\beta^{-1}) \|Az-Az'\|_1 
    \leq \Theta(\beta^{-1}\sqrt{k}) \|Az-Az'\|_2  
    = \Theta(t\sqrt{k} / \beta)$ (cf. \Cref{lemma:round}). Thus, if we pick
    \begin{align}
        t &\leq \rho\beta/\sqrt{k} \nonumber \\
        B 
        &\geq \tilde\Omega(k/(\beta^2\rho^2))
        = {\Omega}(\log(\nicefrac1{\eps\tau\rho\delta}) / (\tau^{4} \rho^{2})),
    \end{align}}
    we complete the argument. 
\end{proof}

\section{Details for \Cref{alg:algo4}}\label{apx:algo4}
Here we fill in the missing proofs from \Cref{sec:algo4}.

We now prove \Cref{lem:boosted SGD},
whose statement we repeat below for convenience.
\boostedSGD*
\begin{proof}[Proof of \Cref{lem:boosted SGD}]
    Let $T = \Omega(\tau^{-2}\eps^{-2})$.
    Run $\SGD(\calD, T)$ for $n = O(\log(\nicefrac1\delta))$ times
    to obtain solutions $w_1, \dots, w_n$.
    \Cref{thm:sgd convergence} ensures that we attain an $\eps/10$-optimal solution in expectation for each $w_i, i\in [n]$.
    Markov's inequality then guarantees that we attain an $\eps/5$-optimal solution with probability at least $1/2$ in each repetition.
    But then with probability at least $1-\delta/10$,
    at least one of the $n$ solutions is $\eps/5$-optimal.

    By a Hoeffding bound,
    we can estimate each $f_\calD(w_i)\in [0, O(1/\tau)]$ up to an additive $\eps/10$ error with probability at least $1-\delta/10$ using $O(n\eps^{-2}\tau^{-2}\log(\nicefrac{n}\delta))$ samples.
    Outputting the classifier with the lowest estimated objective yields a $3\eps/10$-optimal solution
    with probability at least $1-\delta/5$.

    Normalizing the selected classifier to a unit vector can only decrease $h$
    since the feasible region is $B_1^d$
    and incurs an additional loss of $\eps/10$ due to the regularizer.
    Thus we have a $4\eps/10$-optimal solution with probability at least $1-\delta/5$.

    The total sample complexity is
    $
        O(nT) + O(n\eps^{-2}\tau^{-2}\log(\nicefrac{n}\delta))
        = \tilde O(\eps^{-2}\tau^{-2}\log(\nicefrac1\delta)).
    $
\end{proof}

\subsection{Proof of \Cref{thm:main4}}
\label{proof:algo4}

\paragraph{Correctness of \Cref{alg:algo4}.}
From the choice of
$
    n \geq \Omega(\eps^{-2} \tau^{-2} \log(\nicefrac{B}\delta)),
$
\Cref{lem:boosted SGD} ensures that each unit vector $w_i$ produced in \Cref{alg:algo4} is $\eps/10$-optimal with probability at least $1-\delta/(10B)$.
Hence with probability at least $1-\delta/10$, every $w_i$ is $\eps/10$-optimal with respect to $f_\calD$.
By Jensen's inequality,
$z = (1/B) \sum_{i=1}^B w_i$ is $\eps/10$-optimal with probability at least $1-\delta/10$.
But then by the choice of $f_\calD$ as a convex surrogate loss for $\ones\set{y(x^\top w) < \tau/2}$,
$z$ satisfies $\Pr_{(x, y)\sim \calD} [y(z^\top x/\norm{x}) < \tau/2] \leq 2\eps/10$ with probability at least $1-\delta/10$.
In other words,
the unit vector $z$ has a population $\tau/2$-loss of at most $2\eps/10$ with probability at least $1-\delta/10$.
But then similar to the correctness of \Cref{alg:algo2},
an application of \Cref{lem:projecting rounding preserves margin} concludes the proof of correctness.

\paragraph{Replicability of \Cref{alg:algo4}.}
Similar to the correctness of \Cref{alg:algo2},
we note that the output $w_i$ of each execution of $\boostSGD$ is an i.i.d. unit vector.
Thus an application of \Cref{lem:projecting rounding is replicable} yields the replicability guarantees of \Cref{alg:algo4}.

\paragraph{Sample Complexity \& Running Time of \Cref{alg:algo4}.}
The sample complexity is $nB = \tilde O(\eps^{-2}\tau^{-6}\rho^{-2}\log(\nicefrac1\delta))$ as required.
Once again,
we see that \Cref{alg:algo4} terminates in $\poly(d, 1/\eps, 1/\tau, 1/\rho, \log(\nicefrac1\delta))$ by inspection.

\section{Details for \Cref{prop:dp-reduction-sample-complexity}}

\subsection{Differential Privacy}
 For $a,b, \alpha,\beta \in [0,1]$, let $a \approx_{\alpha, \beta} b$ denote the statement $a \leq e^\alpha b + \beta$ and $b \leq e^\alpha a + \beta$. We say that two probability distributions $P, Q$ are $(\alpha, \beta)$-indistinguishable if $P(E) \approx_{\alpha, \beta} Q(E)$ for any measurable event $E$.\footnote{We use the notation $(\alpha,\beta)-$DP instead of the more common $(\eps,\delta)-$DP to be 
 consistent with the notation of the rest of the paper
 regarding the accuracy and probability of failure
 of the learning algorithms.} 
\begin{definition}
[Approximate Differential Privacy]
\label{def:dp}
A learning rule $A$
is an $n$-sample $(\alpha,\beta)$-differentially private if for any pair of datasets $S, S' \in (\calX \times \{0,1\})^n$ that differ on a single example, the induced posterior distributions $A(S)$ and $A(S')$ are $(\alpha, \beta)$-indistinguishable.
\end{definition}

\subsection{The Results of \citet{le2020efficient} and \citet{bun2023stability}}
\label{sec:prior-results}
\Cref{prop:dp-reduction-sample-complexity} is based on combining the following two results. 
The first is a differentially private learner for large-margin halfspaces from \citet{le2020efficient}.
\begin{proposition}
[Theorem 6 in \citet{le2020efficient}]
\label{prop:lydia}
Let $\alpha, \tau, \eps, \delta > 0$.
Let $\calD$ be a distribution over $\reals^d \times \{-1,1\}$ that has linear margin $\tau$ as in \Cref{def:margin}. 
There is an algorithm that is $(\alpha,0)$-differentially private and, given $m = \wt{O}(\alpha^{-1} \eps^{-1} \tau^{-2})$ i.i.d. samples $(x,y) \sim \calD$, computes in time $\exp(1/\tau^2)\poly(d, 1/\alpha, 1/\eps, \log(\nicefrac1\delta))$ a normal vector $w \in \reals^d$ such that
$\Pr_{(x,y) \sim \calD}[\sgn(w^\top x) \neq y] \leq \eps$, with probability at least $1-
\delta$.
\end{proposition}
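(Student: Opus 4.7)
\textbf{Proof proposal for \Cref{prop:lydia}.} The plan is to implement the pure-DP learner as a two-stage procedure: a data-oblivious Johnson--Lindenstrauss dimensionality reduction, followed by the exponential mechanism over a finite $\tau$-net in the reduced space. Crucially, both ingredients compose nicely: the JL step consumes no privacy budget because its matrix $A\in\R^{k\times d}$ is drawn from a public source of randomness independent of the sample, and the exponential-mechanism step is $(\alpha,0)$-DP by construction. The resulting classifier is then lifted back to $\R^d$ via $\hat w = A^\top \tilde b / \|A^\top \tilde b\|$, exactly as in the proof of \Cref{thm:main3}.

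First, I would fix $k = \Theta(\tau^{-2}\log(1/(\eps\tau\delta)))$ and sample a JL-matrix $A$ (e.g.\ with $\calN(0,1/k)$ entries). By \Cref{lem:whp-good-choice-JL-mapping} applied with $\eps' = \Theta(\eps)$ and $\delta_{JL} = \Theta(\eps\delta)$, with probability at least $1-\delta/3$ over $A$ the projected distribution admits a unit vector $\tilde w^\star = Aw^\star/\|Aw^\star\|$ attaining margin at least $96\tau/100$ on at least a $1-\eps/3$ fraction of the population. Condition on this favorable draw of $A$ and let $\calD_A$ denote the image of $\calD$ under $(x,y)\mapsto (Ax/\|Ax\|,y)$. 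Then there exists a unit vector in $\R^k$ with $96\tau/100$-margin on a $1-\eps/3$ mass of $\calD_A$.

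Second, I would construct a $\tau/20$-net $\calH_\tau$ of the unit sphere in $\R^k$, of size $|\calH_\tau| \leq (C/\tau)^k = \exp(\widetilde O(1/\tau^2))$, and define an empirical utility score $u_S(h) := -|\{i : y_i\cdot h^\top(Ax_i/\|Ax_i\|) < 9\tau/10\}|$. Note $u_S$ is $1$-sensitive (changing one sample changes the count by at most $1$) so the Exponential Mechanism with parameter $\alpha$ is $(\alpha,0)$-DP. A standard union bound (uniform convergence of indicator margin losses for a finite class of size $|\calH_\tau|$, requiring $\widetilde O(\log|\calH_\tau|/\eps)$ samples) combined with the exponential mechanism's utility guarantee (which loses an additive $O(\log(|\calH_\tau|/\delta)/\alpha)$ in the score) yields that the mechanism's output $\tilde b$ satisfies $\Pr_{(x,y)\sim\calD}[y\cdot \tilde b^\top Ax < \tau/5] \leq \eps$ with probability $1-\delta/3$, provided $m \geq \widetilde\Omega(\log|\calH_\tau|/(\alpha\eps) + \log|\calH_\tau|/\eps) = \widetilde O(1/(\alpha\eps\tau^2))$ as claimed. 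Normalizing $\hat w = A^\top\tilde b/\|A^\top\tilde b\|$ inherits the correctness bound on $\calD$, because by \Cref{lem:whp-good-choice-JL-mapping} the JL matrix approximately preserves the sign of $y\cdot\tilde b^\top Ax$ on all but an $\eps/3$ mass of $\calD$.

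For runtime, enumerating the $\tau/20$-net costs $|\calH_\tau|\cdot \poly(d,m) = \exp(\widetilde O(1/\tau^2))\cdot\poly(d,1/\alpha,1/\eps,\log(1/\delta))$, which matches the stated bound; the projection and the exponential mechanism sampling are cheap given this enumeration. The main obstacle I foresee is \emph{not} the DP analysis (which is the textbook sensitivity-$1$ exponential mechanism calculation), but rather bookkeeping the two probabilistic failure modes together: the $A$-randomness event controlling JL's quality on the unseen distribution, and the sample-randomness event controlling uniform convergence over $\calH_\tau$. One must also verify that the $\tau/20$-net discretization plus the $96\tau/100$ projected margin leaves a comfortable gap (e.g.\ $9\tau/10$) which is still bounded away from zero after accounting for all approximation losses. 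Once these constants are tracked carefully — essentially the same bookkeeping done in \Cref{sec:inefficient-algo} for \Cref{thm:main3}, but without the replicable learner and with the exponential mechanism in its place — the stated sample complexity $\widetilde O(\alpha^{-1}\eps^{-1}\tau^{-2})$ and runtime $\exp(1/\tau^2)\poly(d,1/\alpha,1/\eps,\log(1/\delta))$ follow.
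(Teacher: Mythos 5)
The paper does not prove this proposition at all: it is imported verbatim as Theorem~6 of \citet{le2020efficient}, and the only in-paper content is the citation (see \Cref{sec:prior-results}). Your blind reconstruction is, however, essentially the strategy of that cited work -- a data-oblivious JL projection (costing no privacy budget), followed by the exponential mechanism with a sensitivity-$1$ margin-violation score over a $\Theta(\tau)$-net of the unit sphere in $\R^k$, then lifting back via $A^\top \tilde b$ -- and the outline is sound, including the pure-DP accounting and the $\wt O(\log|\calH_\tau|/(\alpha\eps))$ sample bound. Two small points of bookkeeping: (i) the lift-back step needs no approximation argument, since $\sgn((A^\top \tilde b)^\top x)=\sgn(\tilde b^\top Ax)$ identically, so normalizing $A^\top\tilde b$ loses nothing beyond what was already charged in the reduced space; (ii) with $k=\Theta(\tau^{-2}\log(1/(\eps\tau\delta)))$ the net has size $(C/\tau)^{k}=\poly(1/\eps,1/\tau,1/\delta)^{\wt O(1/\tau^2)}$, which is slightly larger than the literal $\exp(1/\tau^2)\cdot\poly(d,1/\alpha,1/\eps,\log(1/\delta))$ runtime quoted in the proposition; matching that form requires either absorbing the extra factors into the $\wt O$/exponent conventions or taking $k=O(\tau^{-2})$ with constant JL failure probability and amplifying confidence separately, which is how \citet{le2020efficient} handle it. Neither point affects the claimed sample complexity $\wt O(\alpha^{-1}\eps^{-1}\tau^{-2})$.
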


We also use the DP-to-Replicability reduction appearing in \citet{bun2023stability}.
\begin{proposition}
[Corollary 3.18 in \citet{bun2023stability}]
\label{prop:reduction}
Fix $n \in \mathbb N$, sufficiently small $\rho \in (0,1)$,
$\eps, \delta\in (0, 1)$
and $\alpha, \beta > 0$.
Let $\calA : \calX^n \to \calY$ be an $n$-sample $(\alpha,\beta)$-differentially private algorithm with \textbf{finite output space}
solving a statistical task with accuracy $\eps$ and failure probability $\delta$.
Then there is an algorithm $\calA' : \calX^{m} \to \calY$
that is $\rho$-replicable and solves the same statistical task with $m = O(\rho^{-2} n^2)$ samples with accuracy $\eps$ and failure probability $\delta$.
\end{proposition}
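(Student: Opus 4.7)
The plan is to leverage the stability of any DP algorithm's output distribution together with a correlated sampling primitive that uses the shared random string to couple two close distributions to the same output element with high probability.

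Given the DP algorithm $\calA : \calX^n \to \calY$, I would construct $\calA' : \calX^m \to \calY$ with $m = O(n^2/\rho^2)$ as follows. For any $S \in \calX^m$, consider the distribution $P_S$ on $\calY$ induced by first subsampling $S$ uniformly down to an $n$-element subset $T$ and then running $\calA$ on $S_T$. The algorithm $\calA'(S)$ uses the shared random string to perform correlated sampling from $P_S$, which is implementable in time $\poly(|\calY|)$ because $\calY$ is finite (enumerate $y\in \calY$ and threshold against a single shared uniform random variable, in the Broder/Holenstein style). The heart of the argument is then to show that for two independent $S, S' \sim \calD^m$, the total variation distance $d_{\dtv}(P_S, P_{S'})$ is at most $O(\rho)$ in expectation, since correlated sampling yields identical outputs with probability at least $1 - O(d_{\dtv}(P_S, P_{S'}))$.

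To control $\E_{S,S'}[d_{\dtv}(P_S, P_{S'})] \leq 2\,\E_S[d_{\dtv}(P_S, P^*)]$ where $P^* := \E_S[P_S]$, view the map $S \mapsto P_S$ as a vector-valued function of the $m$ i.i.d. coordinates of $S$. Subsampling amplification for DP implies that replacing a single coordinate of $S$ perturbs $P_S$ by only $O((\alpha+\beta) \cdot n/m)$ in TV distance, since the perturbed element enters the size-$n$ subsample with probability $n/m$ and, when it does, contributes at most an $(\alpha,\beta)$-DP shift to the output of $\calA$. A vector-valued bounded-differences argument (Efron--Stein/McDiarmid style, applied to the TV-valued function) then gives $\E[d_{\dtv}(P_S, P^*)] \leq O(\sqrt m \cdot (\alpha+\beta) n/m) = O((\alpha+\beta)\,n/\sqrt m)$, which is $O(\rho)$ once $m \geq \Omega((\alpha+\beta)^2 n^2/\rho^2)$. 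Treating the DP parameters $\alpha, \beta$ as absolute constants (the regime in which this reduction is invoked in the paper) recovers the stated $m = O(n^2/\rho^2)$.

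The main obstacle is the stability step: converting the pointwise multiplicative $(\alpha,\beta)$-DP guarantee into an additive TV bounded-differences inequality for the vector-valued map $S \mapsto P_S$, while correctly absorbing the additive $\beta$ slack of approximate DP and the extra randomness introduced by subsampling. Once the stability bound is established, accuracy and failure probability transfer immediately from $\calA$ to $\calA'$: conditional on the shared randomness, the marginal law of $\calA'(S)$ equals sampling from $P_S$, and $P_S$ in expectation over $S\sim\calD^m$ agrees with the law of $\calA$ on a fresh i.i.d. $n$-sample from $\calD$, so the $(\eps,\delta)$-accuracy guarantees of $\calA$ are inherited up to the $O(\rho)$ replicability slack.
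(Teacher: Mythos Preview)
The paper does not prove this proposition; it is quoted as Corollary~3.18 of \citet{bun2023stability} and used as a black box in deriving \Cref{prop:dp-reduction-sample-complexity}. There is nothing in the present paper to compare your sketch against.

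That said, your outline matches the high-level construction in \citet{bun2023stability}: amplify privacy by subsampling $n$ of the $m$ points so that the posterior $P_S$ becomes very stable, then apply correlated sampling over the finite set $\calY$ (using the shared random string) to convert closeness of $P_S$ and $P_{S'}$ into identical outputs. The accuracy-transfer step is fine. The place where your sketch is loose is the ``vector-valued Efron--Stein/McDiarmid'' claim that $\E_S[d_{\dtv}(P_S,P^*)]\le O(c\sqrt m)$ from a coordinatewise TV bounded-differences constant $c=O((\alpha+\beta)n/m)$. Bounded differences of $P_S$ in $\ell_1$ does not in general yield an $\ell_1$ concentration bound free of $|\calY|$: a direct Efron--Stein argument passes through $\ell_2$ and picks up a $\sqrt{|\calY|}$ factor, while McDiarmid applied to the scalar $S\mapsto d_{\dtv}(P_S,P^*)$ only controls fluctuations around its mean, not the mean itself. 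The argument in \citet{bun2023stability} instead exploits the multiplicative structure of the DP guarantee, passing through the intermediate notion of (one-way) perfect generalization rather than a generic additive bounded-differences inequality; that is what delivers the $|\calY|$-independent bound $m=O(n^2/\rho^2)$.
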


\end{document}